%File: anonymous-submission-latex-2026.tex
\documentclass[letterpaper]{article} % DO NOT CHANGE THIS
\usepackage{aaai2026}  % DO NOT CHANGE THIS
\usepackage{times}  % DO NOT CHANGE THIS
\usepackage{helvet}  % DO NOT CHANGE THIS
\usepackage{courier}  % DO NOT CHANGE THIS
\usepackage[hyphens]{url}  % DO NOT CHANGE THIS
\usepackage{graphicx} % DO NOT CHANGE THIS
\urlstyle{rm} % DO NOT CHANGE THIS
  % DO NOT CHANGE THIS
\usepackage{natbib}  % DO NOT CHANGE THIS AND DO NOT ADD ANY OPTIONS TO IT
\usepackage{caption} % DO NOT CHANGE THIS AND DO NOT ADD ANY OPTIONS TO IT
\frenchspacing  % DO NOT CHANGE THIS
\setlength{\pdfpagewidth}{8.5in} % DO NOT CHANGE THIS
\setlength{\pdfpageheight}{11in} % DO NOT CHANGE THIS
%
% These are recommended to typeset algorithms but not required. See the subsubsection on algorithms. Remove them if you don't have algorithms in your paper.
%\usepackage{algorithm}
%\usepackage{algorithmic}

%
% These are are recommended to typeset listings but not required. See the subsubsection on listing. Remove this block if you don't have listings in your paper.
%\usepackage{newfloat}
%\usepackage{listings}
%\DeclareCaptionStyle{ruled}{labelfont=normalfont,labelsep=colon,strut=off} % DO NOT CHANGE THIS
%\lstset{%
%	basicstyle={\footnotesize\ttfamily},% footnotesize acceptable for monospace
%	numbers=left,numberstyle=\footnotesize,xleftmargin=2em,% show line numbers, remove this entire line if you don't want the numbers.
%	aboveskip=0pt,belowskip=0pt,%
%	showstringspaces=false,tabsize=2,breaklines=true}
%\floatstyle{ruled}
%\newfloat{listing}{tb}{lst}{}
%\floatname{listing}{Listing}
%
% Keep the \pdfinfo as shown here. There's no need
% for you to add the /Title and /Author tags.
\pdfinfo{
/TemplateVersion (2026.1)
}

\usepackage{amsmath}
\usepackage{amssymb} % For \mathbb
\usepackage{booktabs}
\usepackage{cuted}

\DeclareMathOperator*{\E}{\mathbb{E}}

\def\A{\mathcal{A}}
\def\G{\mathcal{G}}
\def\H{\mathcal{H}}
\def\S{\mathcal{S}}

\def\ld{\log_2}
\def\entropy{\mathbb{H}} % entropy 
\def\MI{\mathbb{I}} % mutual information 
\def\DKL{D_{KL}} % Kullback-Leibler divergence

%%%%% TEMPORARY FOR DRAFTING, REMOVE LATER: %%%%%
\usepackage{todonotes}
\usepackage{xcolor}
\usepackage{url}

\usepackage{amsthm}
\newtheorem{proposition}{Proposition}

% DISALLOWED PACKAGES
% \usepackage{authblk} -- This package is specifically forbidden
% \usepackage{balance} -- This package is specifically forbidden
% \usepackage{color (if used in text)
% \usepackage{CJK} -- This package is specifically forbidden
% \usepackage{float} -- This package is specifically forbidden
% \usepackage{flushend} -- This package is specifically forbidden
% \usepackage{fontenc} -- This package is specifically forbidden
% \usepackage{fullpage} -- This package is specifically forbidden
% \usepackage{geometry} -- This package is specifically forbidden
% \usepackage{grffile} -- This package is specifically forbidden
% \usepackage{hyperref} -- This package is specifically forbidden
% \usepackage{navigator} -- This package is specifically forbidden
% (or any other package that embeds links such as navigator or hyperref)
% \indentfirst} -- This package is specifically forbidden
% \layout} -- This package is specifically forbidden
% \multicol} -- This package is specifically forbidden
% \nameref} -- This package is specifically forbidden
% \usepackage{savetrees} -- This package is specifically forbidden
% \usepackage{setspace} -- This package is specifically forbidden
% \usepackage{stfloats} -- This package is specifically forbidden
% \usepackage{tabu} -- This package is specifically forbidden
% \usepackage{titlesec} -- This package is specifically forbidden
% \usepackage{tocbibind} -- This package is specifically forbidden
% \usepackage{ulem} -- This package is specifically forbidden
% \usepackage{wrapfig} -- This package is specifically forbidden
% DISALLOWED COMMANDS
\nocopyright %-- Your paper will not be published if you use this command
% \addtolength -- This command may not be used
% \balance -- This command may not be used
% \baselinestretch -- Your paper will not be published if you use this command
% \clearpage -- No page breaks of any kind may be used for the final version of your paper
% \columnsep -- This command may not be used
% \newpage -- No page breaks of any kind may be used for the final version of your paper
% \pagebreak -- No page breaks of any kind may be used for the final version of your paperr
% \pagestyle -- This command may not be used
% \tiny -- This is not an acceptable font size.
% \vspace{- -- No negative value may be used in proximity of a caption, figure, table, section, subsection, subsubsection, or reference
% \vskip{- -- No negative value may be used to alter spacing above or below a caption, figure, table, section, subsection, subsubsection, or reference

\setcounter{secnumdepth}{2} %May be changed to 1 or 2 if section numbers are desired.

% The file aaai2026.sty is the style file for AAAI Press
% proceedings, working notes, and technical reports.
%

% Title

% Your title must be in mixed case, not sentence case.
% That means all verbs (including short verbs like be, is, using,and go),
% nouns, adverbs, adjectives should be capitalized, including both words in hyphenated terms, while
% articles, conjunctions, and prepositions are lower case unless they
% directly follow a colon or long dash
%\title{(Em)Power(ment), Not Preference: A New Objective for Model-Based Agents\\\gray{(working title)}}
\title{Model-Based Soft Maximization of Suitable Metrics of Long-Term Human Power}
\author{
    %Authors
    % All authors must be in the same font size and format.
    Jobst Heitzig\textsuperscript{\rm 1},
    Ram Potham\textsuperscript{\rm 2}
}
\affiliations{
    %Afiliations
    \textsuperscript{\rm 1}Potsdam Institute for Climate Impact Research, Potsdam, Germany, heitzig@pik-potsdam.de\\
    \textsuperscript{\rm 2}Independent, ram.potham@gmail.com
    % If you have multiple authors and multiple affiliations
    % use superscripts in text and roman font to identify them.
    % For example,

    % Sunil Issar\textsuperscript{\rm 2},
    % J. Scott Penberthy\textsuperscript{\rm 3},
    % George Ferguson\textsuperscript{\rm 4},
    % Hans Guesgen\textsuperscript{\rm 5}
    % Note that the comma should be placed after the superscript

    %1101 Pennsylvania Ave, NW Suite 300\\
    %Washington, DC 20004 USA\\
    % email address must be in roman text type, not monospace or sans serif
    %proceedings-questions@aaai.org
%
% See more examples next
}

%Example, Single Author, ->> remove \iffalse,\fi and place them surrounding AAAI title to use it
\iffalse
\title{My Publication Title --- Single Author}
\author {
    Author Name
}
\affiliations{
    Affiliation\\
    Affiliation Line 2\\
    name@example.com
}
\fi

\iffalse
%Example, Multiple Authors, ->> remove \iffalse,\fi and place them surrounding AAAI title to use it
\title{My Publication Title --- Multiple Authors}
\author {
    % Authors
    First Author Name\textsuperscript{\rm 1},
    Second Author Name\textsuperscript{\rm 2},
    Third Author Name\textsuperscript{\rm 1}
}
\affiliations {
    % Affiliations
    \textsuperscript{\rm 1}Affiliation 1\\
    \textsuperscript{\rm 2}Affiliation 2\\
    firstAuthor@affiliation1.com, secondAuthor@affilation2.com, thirdAuthor@affiliation1.com
}
\fi

% REMOVE THIS: bibentry
% This is only needed to show inline citations in the guidelines document. You should not need it and can safely delete it.
%\usepackage{bibentry}
% END REMOVE bibentry

\begin{document}

\maketitle

\begin{abstract}
Power is a key concept in AI safety: power-seeking as an instrumental goal, sudden or gradual disempowerment of humans, power balance in human-AI interaction and international AI governance. 
At the same time, power as the ability to pursue diverse goals is essential for wellbeing.

This paper explores the idea of promoting both safety and wellbeing by forcing AI agents explicitly to empower humans and to manage the power balance between humans and AI agents in a desirable way.
Using a principled, partially axiomatic approach, we design a parametrizable and decomposable objective function that represents an inequality- and risk-averse long-term aggregate of human power. 
It takes into account humans' bounded rationality and social norms, and, crucially, considers a wide variety of possible human goals.

We derive algorithms for computing that metric by backward induction or approximating it via a form of multi-agent reinforcement learning from a given world model.
We exemplify the consequences of (softly) maximizing this metric in a variety of paradigmatic situations and describe what instrumental sub-goals it will likely imply.
Our cautious assessment is that softly maximizing suitable aggregate metrics of human power might constitute a beneficial objective for agentic AI systems that is safer than direct utility-based objectives. 
\end{abstract}

% Uncomment the following to link to your code, datasets, an extended version or similar.
% You must keep this block between (not within) the abstract and the main body of the paper.
% \begin{links}
%     \link{Code}{https://aaai.org/example/code}
%     \link{Datasets}{https://aaai.org/example/datasets}
%     \link{Extended version}{https://aaai.org/example/extended-version}
% \end{links}

\section{Introduction}
%\todo[inline]{TODO both}

A duty to empower others, especially those with little power, can be defended on consequentialist \cite{sen2014development}, deontological \cite{hill2002human}, and virtue ethical \cite{nussbaum2019aristotelian} grounds.
At the same time, gradual or sudden {\em dis}empowerment of humans due to misaligned A(G)I, e.g.\ due to power-seeking \cite{turner2019optimal} or non-corrigible over-optimization of misaligned reward functions \cite{gao2023scaling}, is a key AI safety risk.

Accordingly, some papers explored tasking AI agents explicitly with the empowerment of {\em individual} humans, mainly inspired by the information-theoretic channel capacity $\cal E$ between human actions and environmental states, called `empowerment' in \citet{klyubin2005empowerment}.
But that metric is hard to compute, so \citet{du2020ave} use a proxy based on the variance of terminal states reached under a random policy, while \citet{myers2024learning} use contrastive learning of latent state representations to approximate $\cal E$.
\citet{salge2017empowerment} discuss the `empowerment'-based approach in more depth for the single-human case.

This paper extends the theoretical foundations of this approach in two ways.
First, we develop an {\em alternative metric} of human power, `ICCEA power', that is directly based on the key aspect of power: the ability to attain a wide range of possible goals. 
Our metric explicitly and transparently incorporates humans' knowledge about the actions of the AI agent, their expectations about others' behavior, e.g.\ due to social norms, and their bounded rationality. 
Second, using an approach guided by desiderata similar to the axioms of social choice and welfare theory, we develop an objective function for an AI agent interacting with {\em populations} of humans, based on an aggregate of ICCEA power across humans and time that gives the agent desirable incentives such as communicating well \cite{reddy2022first}, following orders, being corrigible \cite{potham2025corrigibility}, avoiding irreversible changes in the environment, protecting humans and itself from harm and disempowerment, allocating resources fairly and sustainably, and acting ``appropriately'' by following relevant social norms \cite{leibo2024theory}.

Though based on {\em possible} human goals, our approach avoids trying to learn individuals' {\em actual, current} goals, because human preferences are changing and non-identifiable \cite{cao2021identifiability,banerjee2011poor} and their prediction is unavoidably uncertain \cite{baker2011bayesian}.
Some critics of a preference-based approach argue for a {\em values}-based approach instead \cite{lowe2025fullstack}, which however requires an even more {\em semantic} world understanding by the AI agent.
A deep semantic understanding is also required in the `freedoms'-based conception of AI ethics in \cite{london2024beneficent}, and compiling its required list of `fundamental capabilities' is difficult \cite{robeyns2006capability}.
By contrast, like the `empowerment'-based approach, our metrics are mostly based on a {\em structural} understanding of possibly dynamics, interactions, and transition probabilities and aim to avoid semantic issues by relating human power to the ability to bring about just {\em any} possible conditions a human might happen to desire.

% use this to argue ai systems should rather be trained to act ``appropriately'', in particular, follow relevant context-dependent conventions and norms, and help stabilize them by sanctioning deviations. still, the very formation of those conventions and norms depend on human goals and conceptions of ``good'' which can change, so humans must retain the power to change the norms, and thus an ai that optimizes for appropriateness but while doing so disempowers humans from changing the norms will undermine its very justification.

% another issue with optimizing for appropriateness is that it seems to require world models even richer in semantics than needed for reward maximization approaches.

%by contrast, this paper is an exercise in grounding the behavior of the ai  ``assistant" almost exclusively in the structure of a multi-agent environment (in particular, its transition kernel), avoiding as much as possible any dependence on semantic aspects (what states and actions ``mean'', how they are ``valued'' and whether they are ``appropriate''), using a specific version of human (em)power(ment) as its core value.

For theoretical convenience, we work in a {\em model-based} setting where the AI agent can plan on the basis of a decent stochastic world model like the ``scientist AI'' envisioned in \citet{bengio2025superintelligent}.  
After developing our human power metric in Section \ref{sec:human_power_metric}, we shortly describe algorithms for softly maximizing it in Section \ref{sec:maximize_human_power}, before reporting insights about the resulting behavior in Section \ref{sec:experiments}. Section 5 concludes.

%Increasingly capable AI systems (will) likely have elaborate world models for representing world states, actors, and possible behaviors \cite{ha2018world,lecun2022path,zhu2024sora,feng2025survey}. 
%Still, predicting individuals' actual actions in multi-agent environments, even if through an explicit theory of mind, is very difficult in practise \cite{rudenko2020human,gu2024simpletom}.
%- point out difference to AvE suite of papers where the robot's value function is a proxy for current-state human empowerment, while in our approach the robot's value function is a long-term aggregate of human empowerment at all points in time. motivate this long-term aggregation: changing goals, avoiding lock-ins/irreversibility, fairness towards future generations. refer to sustainability papers by Marc Fleurbaey.
%- mention \cite{london2024beneficent} approach but refer to \cite{robeyns2006capability} for acknowledging the problem of listing specific capabilities and aggregation across capabilities and individuals.

%\subsubsection{Problem statement}
%Our aim is to design a model-based decision algorithm for the robot for (softly) maximizing a suitable notion of ``aggregate human power'' that does not require the robot to learn the actual goals of the humans and gives the robot certain desirable incentives.

\section{Measuring, Aggregating, and Softly Maximizing Human (Em)Power(ment)}
\label{sec:human_power_metric}

\begin{figure}[ht]
  \centering
  \includegraphics[width=0.95\columnwidth]{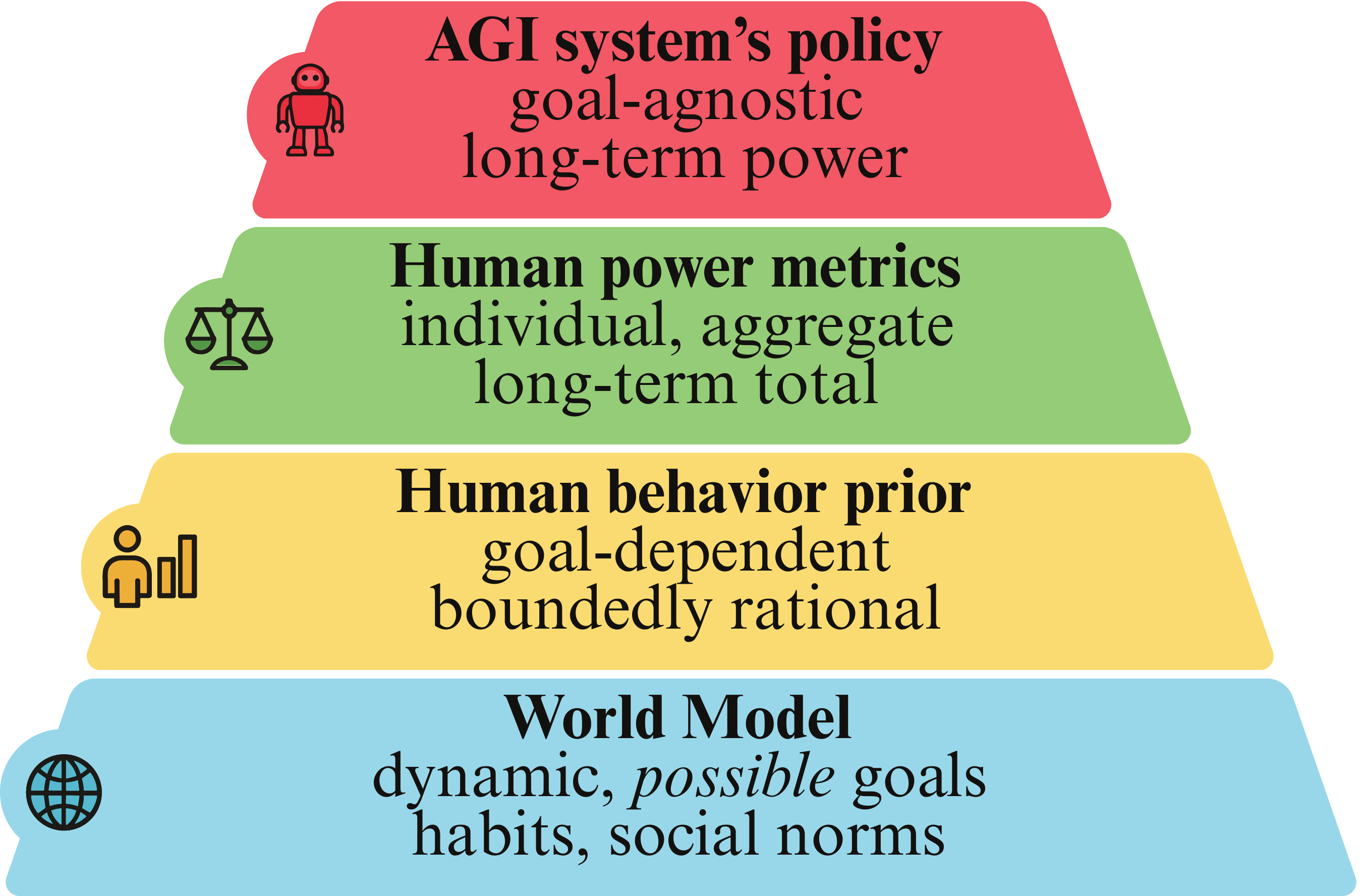}
  \caption{Overview of proposed approach for deriving power-managing policies for a general-purpose AGI system}
  \label{fig:main_figure}
\end{figure}

\paragraph{Framework}
We assume a \textbf{\textit{robot} \boldmath $r$} interacting with several \textbf{\textit{humans} \boldmath $h\in\H$}. 
The robot models the world as a (fully or partially observed) stochastic game with states $s\in\S$.
It gets no {\em extrinsic} reward itself.
Humans have many {\em possible} and potentially {\em changing} \textbf{\textit{goals} \boldmath $g_h\in\G_h$} and get goal-dependent rewards $U_h(s',g_h)$, discounted at factors $\gamma_h<1$. 
Action sets $\A_r(s),\A_h(s)$ may be state-dependent.
Action and policy profiles are written $a=(a_r,a_\H)=(a_r,a_h,a_{-h})$,
$\pi=(\pi_r,\pi_\H)$.
The transition kernel is $P(s'|s,a)$. 

Crucially, we assume the robot neither knows nor forms beliefs about the {\em actual} goals of the humans. 
It {\em does} make assumptions about their level of rationality and their beliefs about each others' behavior, reflected in additional model parameters $\nu_h$, $\pi^0_h$, $\beta_h$, and $\mu_{-h}$, as detailed below.

\paragraph{Task} We want to design an algorithm for computing a policy $\pi_r$ for the robot that implements the aggregate human power maximization objective. 
For tractability, we formalize this objective as an expected discounted return, $V_r = \E_{s_{\ge 0}}\sum_{t\ge 0}\gamma_r^t U_r(s_t)$, based on an {\em intrinsic} reward $U_r(s)$ which represents a suitable assessment of the aggregate human power in $s$.
To design $U_r(s)$, we first design a power metric $W_h(s)$ for individual humans, and then an aggregation function $U_r(s) = F_U((W_h(s))_{h\in\H})$. 

In designing $W_h$ and $F_U$, we have several objectives relating to their tractability, interpretability, and the behavioral incentives that they give the robot. 

\begin{table}[!b]
    \noindent\rule{\linewidth}{0.8pt}\\[-4ex]
    \begin{align}
        Q^m_h(s,g_h,a_h) &\gets \textstyle\E_{a_{-h}\sim\mu_{-h}(s,g_h)} \min_{a_r\in\A_r(s)} \E_{s'\sim s,a} \nonumber\\
            &\qquad\quad \big(U_h(s',g_h) + \gamma_h V^m_h(s',g_h)\big), \label{Qm} \\
        \pi_h(s,g_h) &\gets\nu_h(s,g_h)\pi^0_h(s,g_h) + (1-\nu_h(s,g_h))\times{}\nonumber\\
            &~~{}\times\text{$\beta_h(s,g_h)$-softmax for~} Q^m_h(s,g_h,\cdot), \label{pih} \\ 
        V^m_h(s,g_h) &\gets \textstyle\E_{a_h\sim\pi_h(s,g_h)} Q^m_h(s,g_h,a_h), \label{Vm} \\[-3ex]
\intertext{\noindent\rule{\linewidth}{0.4pt}} \nonumber\\[-5ex]
        Q_r(s,a_r) &\gets \textstyle \E_g\E_{a_\H\sim\pi_\H(s,g)} \E_{s'\sim s,a} \gamma_r V_r(s'), \label{Qr} \\
        \pi_r(s)(a) &\propto (-Q_r(s,a_r))^{-\beta_r}, \label{pir} \\
        V^e_h(s,g_h) &\gets \textstyle\E_{g_{-h}}\E_{a_{\H}\sim\pi_{\H}(s,g)} \E_{a_r\sim\pi_r(s)} \E_{s'\sim s,a}  \nonumber\\
            &\qquad\qquad \textstyle\big(U_h(s',g_h) + \gamma_h V^e_h(s',g_h)\big), \label{Ve} \\
        X_h(s) &\gets \textstyle \sum_{g_h\in\G_h} V^e_h(s,g_h)^\zeta, \label{Xh} \\
        U_r(s) &\gets \textstyle -\big(\sum_{h} X_h(s)^{-\xi}\big)^\eta, \label{Ur} \\
        V_r(s) &\gets \textstyle U_r(s) + \E_{a_r\sim\pi_r(s)} Q_r(s,a_r). \label{Vr}
    \end{align}\\[-5ex]
    \rule{\linewidth}{0.8pt}
    \caption{Computation of ICCEA power $W_h=\ld X_h$ and intrinsic robot reward $U_r$ as derived in the text.}
    \label{tab:equations}
\end{table}

\begin{table*}[ht]
    \centering
    \begin{tabular}{ll}
       \toprule
       \bf Desideratum  & \bf Corresponding design choice or world model requirement \\
       \midrule
       $V_h$ recursively computable & Define via Bellman equation: $V_h(s,g_h)=\E_{s'}(U_h(s',g_h)+\gamma_h V_h(s',g_h))$ \\
       $V_h$ comparable across humans
       & Assume goals $g_h$ are events: $g_h\subseteq\S$, $U_h(s',g_h)=1_{s'\in g_h}$ \\
       $V_h$ has natural interpretation & Make $V_h$ a goal-reaching probability: $s\in g_h$, $s'$ reachable from $s\Rightarrow s'\notin g_h$ \\
       \midrule 
       $r$ incentivized to make commitments & Base $\pi_h$ on $h$ assuming $r$ takes worst action not ruled out by $r$'s commitments \\
       $r$ considers $h$'s bounded rationality & Base $\pi_h$ on varying, unstable goals, habits, social norms, mutual expectations \\
       \midrule
       $W_h$ based on $r$'s best estimate of $V_h$ & Distinguish $h$'s simulated estimate $V^m_h$ for $\pi_h$ and $r$'s estimate $V^e_h$ for $W_h$ \\
       $W_h$ indep.~of ``unaffected'' goals & Use separable ansatz: $W_h = F^G(\sum_{g_h} f^G(V^e_h))$  ($\Rightarrow$ enables stoch.~approx.) \\
       $W_h$ additive across indep.~subgames & $F^G=\log_2$ ($\Rightarrow W_h=$ ``certainty-equivalent'' effective no.~of binary choices) \\
       $W_h>-\infty$ & Make set $\G_h$ of possible goals wide enough to cover all possible trajectories \\
       Range of $W_h$ is symmetric around 0 & Make each trajectory fulfill exactly one $g_h\in\G_h$ and put $\zeta=2$ (see below) \\ 
       \midrule
       $U_r$ indep.~of ``unconcerned'' agents & Use separable ansatz: $U_r = F^H(\sum_h f^H(W_h))$ ($\Rightarrow$ enables stoch.~approx.) \\
       Pigou--Dalton-type inequality aversion & Make $f^H$ strictly concave, e.g., $f^H(w)=-2^{-\xi w}$ with $\xi>0$ \\
       Protect a human's ``last'' bit of power & Choose $\xi\ge 1$, e.g., $\xi=1$ \\
       $r$ cares for current {\em and} later human power & Base $\pi_r$ on $V_r(s_0) = \E_{s_{\ge 0}} \sum_{t=0}^\infty \gamma_r^t U_r(s_t)$, not just $U_r(s_0)$ \\  
       Limit intertemporal power trading & Also make $F^H$ strictly concave (intertemporal inequality aversion) \\ 
       \midrule
       $\pi_r$ indep.~of common rescaling of $V^e_h$ & Use power laws: $f^G(v)=v^\zeta$, $\zeta>0$; $f^H(w)=-2^{-\xi w}$, $\xi\ge 1$; \\
       & $F^H(y)=-(-y)^\eta$, $\eta > 1$; and $\pi(s)(a)\propto (-Q_r(s,a_r))^{-\beta_r}$, $0\le\beta_r\le\infty$ \\
       $r$ incentivized to reduce uncertainty & Choose $\zeta>1$ (risk aversion, preference for reliability) \\
       Avoid risks from over-optimization & Choose $\beta_r<\infty$ (soft optimization, exploration) \\
       \bottomrule
    \end{tabular}
    \caption{Desiderata and corresponding metric design choices for metrics of humans' goal-attainment ability $V_h(s,g_h)$, momentary individual power $W_h(s)$, momentary aggregate power $U_r(s)$, long-term total human power $V_r(s)$ for soft maximization by an AGI system (``robot'') $r$, $r$'s prior on human behavior $\pi_h$ used to estimate $V_r$,
    and its own resulting policy $\pi_r$.}
    \label{tab:desiderata}
\end{table*}

Our final equations for the fully observed case are collected in Table \ref{tab:equations} (see the Supplement for the partially observed case). 
We will motivate them now in detail.

\subsection{Individual-Level Metric: ICCEA Power}

Our power metric basically counts the effective number of goals a human can achieve. 
It is built in three steps: defining what it means to achieve a goal, adjusting for human bounded rationality, and aggregating the achievement ability across all possible goals into one number.
Rather than trying to capture the full range of subtle aspects of existing notions of human power, we focus on those aspects we believe a robot can robustly infer from the structure of its world model, encoded in state and action sets, transition kernel, and observation functions.
Since we want to incentivize the robot to remove constraints and uncertainties, share information, make commitments, and improve human cognition, coordination, and cooperation, our power metric will also depend on $r$'s model of human decision making.

We start with the intuition that humans have `goals' and that `power' is about the means to achieve a wide range of goals.
We aim to measure {\em informationally and cognitively constrained effective autonomous (ICCEA) power:} 
essentially how many goals a human can freely choose to reach with more or less certainty, given their information, cognitive capabilities, and others' behavior.

\paragraph{Goals} We use an easily interpretable compromise between the {\em reachability} \cite{krakovna2018measuring} and {\em attainable utility} \cite{turner2020conservative} approaches. 
We deviate significantly from the information-theoretic approach \cite{klyubin2005empowerment} which does not explicitly involve {\em any} goals.
We assume that each possible temporary goal that $h\in\H$ might have is to reach some set of states $g_h\subseteq\S$ representing a possibly desirable event.
The corresponding reward function is the indicator function $U_h(s,g_h) = 1_{s\in g_h}$. 
So the set of goals $\G_h$ is simply a set of subsets of $\S$.
We require that the desirable states in a goal $g_h$ are mutually unreachable (e.g., are all terminal states or states for a particular time-point $t$). Then the resulting value $V_h^\pi(s)$ is simply the probability of the desirable event under policy profile $\pi$.
This grounds $h$'s goal attainment or `attainable utility' in probability and increases interpretability.
As it is then bounded within $[0,1]$, this also avoids aggregation problems such as dominance by ``utility monsters''.
To avoid issues with probability zero, $\G_h$ must be wide enough so that each possible state trajectory fulfills at least one possible goal.\footnote{%
    Otherwise the quantity $X_h^{-\xi}$ in eqn.~\eqref{Ur} could get infinite.}
E.g., $\G_h$ could be a partition of the terminal states.
    
We believe that restricting the model to this very basic type of goal will simplify the derivation of $\G_h$ from learned latent representations of generic world states and generic human goals as encoded in language or foundation models, and will obviate the need for individual-level data about a particular human's possible goals. 
This should mitigate risks arising from misaligned models of human goals.\footnote{%
    An even more restrictive choice would be to identify goals with {\em single} (terminal) states, but this seems too specific in complex, partially observed, multi-agent environments, and the resulting $\G_h$ would not behave well under world model refinements. 
    }

\paragraph{Bounded rationality} 
We equip $r$ with a simple model of $h$'s decision making that focuses on giving $r$ the right incentives. 
It assumes $h$ cannot realize the maximal goal attainment probability due to a variety of reasons relating to exploration, imperfect action implementation, information constraints, others' behavior, and potentially state-dependent cognitive limitations.
In particular, $r$ does {\em not} assume $h$ to have correct beliefs about others' behavior that would lead to an equilibrium (Nash, quantal response, etc.).
Instead, $r$ models $h$ as having fixed beliefs $\mu_{-h}$ about other humans' behavior (where `$-h$' is short for $\H\setminus\{h\}$).
These would also reflect social norms (which LLM-based systems already understand, \citet{smith2024concordia}). 
Hence the state-goal-action values $Q^m_h(s,g_h,a_h)$ that $r$ assumes guide $h$'s behavior are based on $\mu_{-h}$.
This is reflected in eq.~\eqref{Qm} in Table \ref{tab:equations}.

Regarding {\em human actions,} $r$ assumes that $h$ uses a mixture (governed by a probability $\nu_h$) between 
(i) {\em habitual, `system-1'} behavior encoded in some default policy $\pi^0_h(s,g_h)$, again reflecting social norms, 
and (ii) {\em boundedly rational, `system-2'} behavior represented by a Boltzmann policy with rationality parameter $\beta_h$, 
see eq.~\eqref{pih}.
As in \citet{ghosal2023effect}, $\beta_h$ may be state-dependent, which will give $r$ incentives to choose states with larger $\beta_h$,
and it might be estimated from observations \cite{safari2024classification}.
Reflecting social norms, $\nu_h$, $\mu_{-h}$, and $\pi^0_h$ are the only significantly ``semantically loaded'' elements of the world model.

To model {\em human beliefs about the robot's actions,} the world model contains information about what actions $r$ has previously {\em committed} to choose from in a state $s$: the action set $\A_r(s)$ only contains those actions, and different commitment histories are considered different states.
Then $r$ models $h$ as being {\em cautious regarding $r$'s commitment-compliant actions.}
It thus uses the $\min_{a_r\in\A_r(s)}$ operator to compute $Q^m_h(s,g_h,a_h)$ in eq.\ \eqref{Qm}.
This is not for realism but because it incentivizes $r$ to make goal-independent commitments about its interaction behavior, e.g.~by properly labelling its buttons or promising to react to certain verbal commands in certain ways.
The incentive for $r$ to choose such a ``commitment action'' in some state $s$ arises since that will make $\A_r(s')$ of later states $s'$ smaller and will thus weakly increase $Q^m_h(s',g_h,a_h)$, without $r$ ever having to guess $g_h$.

These assumptions also allow $r$ to first calculate a behavior prior $\pi_h$ for each $h$ independently, before deciding its own policy $\pi_r$.
This avoids issues around non-uniqueness of strategic equilibria and non-stationarity in learning.

\paragraph{Effective goal attainment ability}
While $r$ assumes $h$'s {\em behavior} $\pi_h$ is based on $h$'s {\em cautious} value assessment $V^m_h$, eq.\ \eqref{Vm}, 
its assessment of $h$'s {\em effective} goal-reaching ability $V^e_h$, used in computing $h$'s {\em power,} 
will generally differ from $V^m_h$.
This is because $r$'s beliefs about other humans' policies, $\pi_{-h}$, may differ from $h$'s beliefs, $\mu_{-h}$.
And $r$'s policy $\pi_r$ will generally differ from the worst case given by $\min_{a_r}$. 
Hence eq.~\eqref{Ve} calculates $V^e_h$ as the probability of $g_h$ being fulfilled under the {\em actual} policy $\pi_r$ and the derived prior $\pi_\H$, 
averaged over others' potential goals $g_{-h}$ and corresponding behaviors $\pi_{-h}$.
This is then the basis of the power metric.

\paragraph{Aggregation across goals}
How should $r$ aggregate $h$'s effective goal attainment abilities $V^e_h(s,g_h)$ across all possible goals $g_h\in\G_h$ to get an assessment of $h$'s ICCEA power in $s$, $W_h(s)$? 
Similar to \citet{fleming1952cardinal}, we want the aggregation to be independent of goal labels, continuous, strictly increasing in each $V^e_h(s,g_h)$, and to fulfil ``independence of unaffected goals''. 
This implies that the aggregation must be ``separable'', $W_h(s) = F^G(X_h(s))$ with $X_h(s)=\sum_{g_h} f^G(V^e_h(s,g_h))$ for some continuous and strictly increasing transformations $F^G,f^G$.
Since $X_h(s) = \E_{g_h} |G_h| f^G(V^e_h(s,g_h))$, we can then also hope to learn it via stochastic approximation.

For reasons that will become clear when discussing $r$'s reward function $U_r$ below, we choose the {\em inner} transformation $f^G$ to be of the form $f^G(v)=v^\zeta$ for some $\zeta>0$.
This is analogous to certain ``non-expected'' utility theories, in particular to rank-dependent utility theory with the probability-weighting function $w(p)=p^\zeta$ \cite{quiggin1982theory}.
We choose $\zeta>1$ to incentivize $r$ to reduce uncertainty. 
It would then consider $h$ to be more powerful when $h$ can choose between two deterministic outcomes (so that $X_h(s)=2\times 1^\zeta=2$) than when $h$ can choose between two coin tosses (so that $X_h(s)=4\times (1/2)^\zeta<2$).
This can be interpreted as risk aversion or a preference for reliability.

Our choice of the {\em outer} transformation $F^G$ is somewhat arbitrary. 
This is because the subsequent aggregation across humans will involve an additional transformation anyway. 
We choose $F^G=\ld$ so that power is measured in bits.
In the limit of full rationality, it then also behaves in an additive way if the game is decomposable into several independent simultaneous games.
This choice leads to a convenient relationship between our final ICCEA power metric, 
\begin{align}
    W_h(s) = \textstyle\ld X_h(s) = \ld\sum_{g_h} V^e_h(s,g_h)^\zeta,
\end{align}
and the information-theoretic notion of `empowerment', as described below. 
In the case where $h$ can choose between fulfilling $k$ different goals for sure, we simply have $W_h(s) = \ld k$.
Because each trajectory fulfils at least one goal $g_h$, we always have $X_h(s)>0$ and thus $W_h(s)>-\infty$.
But $W_h(s)$ may be negative in situations with very little control.\footnote{
    If $\G_h$ is a partition of $\S$ into $k$ blocks,
    the world is deterministic, and $h$ cannot influence it, then $W_h(s)=0$.
    If instead each $g_h$ is reached with probability $1/k$ regardless of what $h$ does, and if $\zeta=2$, then $W_h(s)$ attains its minimum of $-\ld k$.
    The resulting symmetric value range $W_h(s)\in[-\ld k,\ld k]$ suggests making $\G_h$ a partition and putting $\zeta=2$ might be a natural choice.}

\subsubsection{Relationship to `empowerment'}

\citet{klyubin2005empowerment} define `empowerment' as the channel capacity between actions and states. 
In a single-player multi-armed bandit environment with possible outcomes $s'$, this equals the maximal mutual information
$E_h = \textstyle\max_{\pi_h} \MI_{\pi_h}(a_h;s')$.
In the Supplement, we show that $E_h\le W_h$ if we put $\G_h=\S$, $\zeta=1$, and assume full rationality ($\nu_h=0$, $\beta_h=\infty$), similar to what \citet{myers2024learning} have shown. 
Similarly, for $\zeta>1$, our metric $W_h$ is an upper bound of an {\em entropy-regularized version of `empowerment',}
\begin{align}
    E^\zeta_h &= \textstyle\max_{\pi_h} \big(\MI_{\pi_h}(a_h;s') - (\zeta-1)\entropy_{\pi_h}(s'|a_h)\big),\label{Ezeta}
\end{align}
sharing the same value range and coinciding in edge cases.

But while the policy $\pi_h$ that $r$ estimates in our approach is a function of state {\em and goal} $g_h$, has typically low entropy as it aims to reach $g_h$, and can be found by standard dynamic programming or RL approaches, 
the maximizing ``policy'' $\pi_h$ in eq.\ \eqref{Ezeta} has no real use, has typically high entropy in order to maximize $\MI(a_h;s')$, and is harder to find since the optimization problem is non-convex.

\subsection{Bounded Trade-Off Aggregation}

How should the robot aggregate all humans' ICCEA power $W_h(s)$ to determine its own intrinsic reward $U_r(s)$?
This depends on what incentives we want to give $r$ regarding changes in 
(i) the inter-human power distribution,
(ii) the inter-temporal power distribution, and
(iii) the power distribution across different realizations of uncertainty. 
Similar questions abound in welfare theory, guiding the way. 

We again want anonymity, continuity, strict monotonicity in each $W_h(s)$, and an independence axiom (``Independence of unconcerned agents'').
Again, this implies a separable form, $U_r(s) = F^H(\sum_h f^H(W_h(s)))$ with continuous, strictly increasing functions $f^H$ and $F^H$ \cite{fleming1952cardinal}.

\paragraph{Inter-human trade-offs}
We do not want $r$ to concentrate power in the hands of a few, 
so we require the Pigou--Dalton principle of inequality aversion \cite{pigou1912wealth}. 
This implies $f^H$ must be strictly concave.
The functions most commonly used in such a context are those of ``constant relative'' and ``constant absolute inequality aversion'' \cite{amiel1999measuring}.
As $W_h(s) = \ld X_h(s)$, a natural choice is to use constant {\em absolute} inequality aversion w.r.t.\ $W_h(s)$ 
since that is equivalent to constant {\em relative} inequality aversion w.r.t.\ $X_h(s)$. 
This implies that $f^H(W_h(s)) = -2^{-\xi W_h(s)} = -X_h(s)^{-\xi}$ for some $\xi>0$. 
%What value of $\xi$ should we use? 

As it turns out, if we put $\xi=1$, we disincentivize $r$ from ``taking away a person's last binary choice'' in the sense that reducing one human's $W_h(s)$ from one bit to zero bits cannot be made up by increasing any other human's $W_{h'}(s)$ from $\ge 1$ bit to {\em any} value,
because $-2^{-1} -2^{-W_{h'}(s)} \ge -1 > -2^{-0} -2^{-w}$ for any $w\ge 1$.
This is related closely to the idea of {\em minimal individual rights} \cite{pattanaik1996individual}.
We thus choose $\xi=1$.\footnote{%
    To get a feeling for the effects of this choice of $f^H$ in a deterministic environment: for large enough $k$, taking away one of $k$ options from $h$ can be compensated by either giving at least two additional options to another $h'$ with $k$ options or at least five additional options to some $h'$ with $2k$ options.}

\paragraph{Intertemporal trade-offs}
We find it natural to also disincentivize $r$ from trading off current vs.\ later human power too much.
It should generally prefer a trajectory $(s_1,s_2,\dots)$ with a rather homogenous power distribution along time, such as $W_h(s_1)=W_{h'}(s_2)=1$ and $W_h(s_2)=W_{h'}(s_1)=2$, to a trajectory where, say, $W_h(s_1)=W_{h'}(s_1)=1$ and $W_h(s_2)=W_{h'}(s_2)=2$. 
This means that $F^H$ should be strictly concave.
Note that since $f^H(w)<0$, $F^H$ needs to be defined for negative values only.

We motivate our concrete choice of $F^H$ and $\pi_r$ (and of $f^G$) by the following independence requirement.
Assume we introduce an additional uncertainty into the world model (e.g., a formerly not modelled change of overall circumstances) whose consequence is that all goal attainment probabilities $V^e_h(s,g_h)$ are multiplied by some common factor $b\in(0,1)$. 
Then this should not change the policy $\pi_r$.
The simplest way to fulfil this is to put $f^G(v)=v^\zeta$ (as done already above), $F^H(y)=-(-y)^\eta$ with $\eta>1$, %\footnote{An $\eta\approx 1$ makes approximate distributed computing of $V_r$ for rarely interacting subpopulations easier (see Supplement).}
and to use either an argmax policy for $\pi_r$ or a power-law-like policy with $\pi_r(s)(a)\propto (-Q_r(s,a_r))^{-\beta_r}$ for some $\beta_r>0$.
Note that the minus signs are needed since $Q_r(s,a_r)<0$.

We choose $\beta_r<\infty$ to allow the robot some exploration, e.g., to improve its world model.
This should also help avoiding remaining safety risks when our metric misses some subtle but important aspects of `power' that might thus be driven to very undesirable states under a full maximization of $V_r$, similar to \cite{zhuang2020consequences}. 

\paragraph{Aggregation across uncertainty}
To deal with uncertain {\em successor states} $s'$, standard axioms suggest we should simply take expectations in eqns.~\eqref{Qr} and \eqref{Vr}. 
When dealing with uncertain {\em goals} $g_h$, we want $r$ to be rather corrigible.
So we assume goals might change anytime and take an expectation over independent uniform draws $g_h$.

Table \ref{tab:desiderata} summarizes all the above design choices.

%For easier interpretation during analyses, one might eventually transform the quantities $X_h$, $U_r$ and $V_r$ back into units of bits: $W_h(s) = \ld X_h(s)$ (individual power in $S$), $\apow(s) = -(\xi\eta)^{-1}\ld(-U_r(s))$ (aggregate power in $s$), $\tpow(s) = -(\xi\eta)^{-1}\ld(-V_r(s))$ (total power from $s$ on).
%\todo{maybe remove?} 

\paragraph{Existence and (non-)uniqueness}
In an acyclic environment, one can use backward induction to solve for the unique solution of \eqref{Qm}--\eqref{Vr} (see below).
Otherwise, eqns.~\eqref{Qm}--\eqref{Vm} define a continuous self-map $F$ on a finite-dimensional closed convex polytope of values $(Q^m_h,\pi_h,V^m_h)$ and must thus have a solution due to Brouwer's fixed point theorem. 
Due to the softmax, $F$ is neither a contraction nor is the resulting value iteration map monotonic unless $\beta_h<\beta_h^1$ for some $\beta_h^1>0$, so the solution can be non-unique in cyclic environments. %(example in Supplement).  
Eqns.~\eqref{Qr}--\eqref{Vr} also define a continuous self-map $G$ from a finite-dimensional bounded convex set $D$ of values $(Q_r,\pi_r,V^e_h,X_h,U_r,V_r)$ which is however not closed because $G$ is undefined for zero values in $Q_r$ or $X_h$.\footnote{%
    We could fix this by adding a small constant $\epsilon>0$ to $Q_r$ and $X_h$ before taking powers, which makes $G$ defined on a closed convex polytope as well, so that it must have a fixed point.
    }
Still, for $\beta_r=0$, there is a unique solution because $V^e_h$ is then the value function of a fixed policy, $U_r$ is independent of $V_r$, and so $V_r$ is also the value function of a fixed policy and reward function.
Since $F,G$ are continuous in $\beta_h,\beta_r$,
we thus conjecture that homotopy / continuation methods can single out a unique ``principal'' solution for any $\beta_h,\beta_r>0$ even in cyclic environments, as in \citet{goeree2016quantal}.

%??? discuss the superexponential resource-to-options case (motivation: https://chatgpt.com/share/680512c7-c904-8003-9d7a-773aeb8a6284)

\section{Model-Based Planning or Learning\\ to Softly Maximize Aggregate Human Power}

\label{sec:maximize_human_power}

In \textbf{small acyclic stochastic games}, one can compute all relevant quantities directly via backward induction on $s\in\S$: for each $h\in\H$, $g\in\G$, $a_h\in\A_h$, and $a_r\in\A_r$, compute \eqref{Qm}--\eqref{Vr} in that order.

\subsection{Complex Multi-Agent Environments:\\ Model-Based Temporal Difference Learning}
\label{sec:learn}

If $\S$ and $\A$ are large but $\H$ and $\G_h$ are not, the robot can use a tabular or approximate learning approach.\footnote{%
    If also $\H$ and $\G_h$ are large, $r$ could use single neural networks $Q^m_\H,\pi_\H,V^m_\H,V^e_\H,X_\H$ that take feature vectors describing $h$ and $g_h$ as additional inputs, and use samples $h,g_h$ to train these networks and to approximate the sums in eqns.~\eqref{Xh} and \eqref{Ur}.}
%\footnote{See the Supplement for the deep learning setup}

\subsubsection{Phase 1: Learning the human behavior prior}
For each $h\in\H, g_h\in\G_h$ separately, learn tables or neural network approximations of $Q^m_h$ and $\pi_h$.
Generate samples $(s,g_h,a,s')$ using a slowly updated $\beta_h$-softmax policy $\pi_h(s,g_h)$ based on $Q_h^m$ with a decreasing amount of additional exploration,
the prior policy $a_{-h}\sim\mu_{-h}(s)$ for other humans, 
and, to eventually learn the minimum in \eqref{Qm}, an $\epsilon$-greedy policy for $a_r$ based on the negative expected value $-\E_{s'\sim s,a} (U_h(s',g_h) + \gamma_h V^m_h(s',g_h))$ with $\epsilon\to 0$.
Then use expected SARSA targets on a time-scale faster than $\pi_h$. 
%Jobst: Obvious details skipped:
%We then update the $Q_m$ network via batched SGD using the update targets 
%\begin{align}
%    q^m_h(s,g_h,a_h) &\gets U_h(s',g_h) + \gamma_h V^m_h(s',g_h) \label{qm}
%\end{align}
%for $Q^m_h(s,g_h,a_h)$, where $V^m_h(s,g_h)$ is given by \eqref{Vm}.

\subsubsection{Phase 2: Learning the robot reward and policy}

Based on the learned $\pi_h$, now aim to simultaneously learn tables or network approximations of $V^e_h$ and $X_h$ for all $h$, and of either $Q_r$ (DQN approach) or $V_r$ (actor-critic (AC) approach).
Generate data samples $(s,g,a,s')$ from rollouts using the fixed policies $\pi_h(s,g_h)$, 
and either a $\beta_r'$-softmax policy based on $Q_r$ with $\beta_r'\nearrow\beta_r$, or a network approximation of $\pi_r$ trained on $V_r$ with entropy regularisation.  
Sample a new goal profile $g$ every $N_g$ steps.
Instead of the direct expectation calculations of \eqref{Qr}, \eqref{Ve}, \eqref{Xh}, 
then update the tables or networks via batched SGD using the update targets
\begin{align}
    q_r(s,a_r)\text{~or~}v_r(s) &\gets \gamma_r V_r(s'), \label{qrvr} \\
    v^e_h(s,g_h) &\gets U_h(s',g_h) + \gamma_h V^e_h(s',g_h), \label{qe} \\
    x_h(s) &\gets |\G_h|\,V^e_h(s,g_h)^\zeta. \label{wh} 
\end{align}
In the AC case, use an advantage-weighted log-probability loss for $\pi_r$ based on the advantage estimate $v_r(s) - V_r(s)$.
%The Supplement has details on the used variants of DQN and actor-critic. 

% If we find oscillations in phase 2, we might want to try frequency-adjusted learning rate, i.e., multiply the gradient for $Q(s,a)$ with $1/\pi(s)(a)$ in the update step, so that on average, rarely taken actions are updated as much as frequently taken actions, see first equation in 3.2 in this paper: \url{http://cv.znu.ac.ir/afsharchim/DAI/2010_Frequency%20Adjusted%20Multi-agent%20Q-learning.pdf} and Figure 1 in this paper:  \url{https://arxiv.org/pdf/2501.00160}.

\subsubsection{Anticipated convergence}

We expect phase 1 to converge reliably for a sufficiently expressive neural network because its tabular version is known to converge for a suitable, two time-scale learning rate schedule.

We are less certain about phase 2. For one thing, the uniqueness of the solution in the finite acyclic case suggests there might be a unique solution in the general case. On the other hand, the update operator is not a contraction here because of the interaction between $h$ and $r$ (similar to other MARL problems), which suggests convergence might still fail. To limit the error propagation from $Q_r$ via $\pi_r$ to $V^e_h$, we use a $\beta_r$-softmax policy $\pi_r$ (which has Lipschitz constant $\beta_r$) instead of an $\epsilon$-greedy policy ({\em not} Lipschitz-continuous).

\section{Experiments}
\label{sec:experiments}

\subsection{Analysis of paradigmatic situations}
\label{sec:analysis}

We analyze the behavioral implications of our approach in several paradigmatic situations (details in the Supplement).

\paragraph{Making (conditional) commitments}
If the robot can make binding {\em commitments} (e.g.\ by offering labelled buttons causing different behaviors), make {\em secret plans} to react to $h$'s actions in certain ways (e.g. by offering unlabelled buttons), or {\em act without waiting} for button presses, it will generally make commitments if it assumes $h$ to be sufficiently rational to use that information to act in ways that make $r$ do what $h$ actually wants.
This is because $\pi_h$ does not depend on $\pi_r$ (encoding what $r$ {\em plans}) but on $\A_r$ (encoding what $r$ {\em has committed to}).
Via such commitments, $r$ will establishing semantic connections between human actions (e.g., speech acts) and certain potentially complex behaviors of its own, to be able to act as an instruction-following assistant, somewhat similar to \citet{reddy2022first}.

\paragraph{Optimal menu size} 
While an `empowerment' maximizing robot would present humans with as many action options as possible,
our robot will avoid overwhelming humans with too many options. 
If $r$ can choose to give $h$ any number $k$ of actions, each of which fulfils a different goal, it will choose $k\approx (e^{\beta_h}-1) / (\zeta-1)$ (noting that $\zeta>1$).
This is because for larger $k$, $h$'s goal attainment probability will decrease due to bounded rationality so much that $h$'s effective power decreases despite the {\em theoretical} potential to fulfil more goals.

\paragraph{Asking for confirmation} 
The robot will sometimes ask for confirmation before obeying a command to perform an action because doing so decreases $h$'s error rate related to its bounded rationality. 
Also, if the action is irreversible, taking the action will remove $h$'s {\em subsequent} ability to revert that choice, so delaying action leaves $h$ with decision power for longer. 
However, $r$ will {\em eventually obey} because otherwise $h$ would not have that choice in the first place. 
As can be expected, the number of times $r$ asks back increases with larger $\gamma_h$ and $\gamma_r$, i.e., the more patient $h$ and $r$ are, 
and decreases with larger $\beta_h$.
E.g., in a minimal model, with $\gamma_h=0.99$, $\gamma_r=0$, and an effective human error rate of 10\%, 
$r$ will already ask back twice before obeying a command.

\paragraph{Following norms}
The robot will tend to follow human social norms that generally foster goal achievement. 
This is because $r$ will model most $h$ as expecting most others to follow the norm ($\mu_{-h}$), will thus expect those $h$ to also follow the norm ($\pi_h$) since that increases $V^m_h(s,g_h)$ for most $h$ and $g_h$. 
Thus $r$ will {\em plan to follow the norm} to prevent reducing those $h$'s power from harm or mis-coordination.
If $r$ assumes humans to have internalized the norm into habits ($\pi^0_h$), it will even {\em commit to following the norm} to coordinate better with them, e.g.\ when passing each other in the street.

\paragraph{Resource allocation}
If the robot can split a total amount $M$ of resources between $h_1$ and $h_2$, 
and for $h_i$ to have resources $m$ translates into having a power of $W_h=f(m)$ bits,
then $r$ will generally prefer an equal split, at least if $f$ is linear\footnote{%
    E.g., a linear $f$ seems plausible if $M$ is money that can be spent for paying others to make independent choices in one's favor.
    }, 
concave, or not too convex,
and will only prefer an unequal split or even full resource concentration if $f$ is very convex.
Larger values of $\xi$ will make the split more equal.

\paragraph{Inadvertent power seeking}
While increasing $h$'s power, $r$ might inadvertently acquire even more power than $h$. E.g., if $r$ does R\&D and tells $h$ its findings, some of them might be comprehensible and thus useful only for $r$ but not for $h$.

\paragraph{Manipulating mutual expectations}
The robot might choose to make humans have incorrect beliefs $\mu_{-h}$ about each others' behavior in situations where $r$ can do so in the first place and where correct beliefs would lower effective goal attainments $V^e_h$, e.g.\ when most strategic (quantal response) equilibria are bad and most ``social optima'' (in terms of total power $U_r$) are far from strategic equilibrium.

\paragraph{Allowing human self-harm}
If the robot cares for $h$'s future power ($\gamma_r\gg 0$) and can provide $h$ with the means to harm themselves, $r$ will trade off the temporary power increase from having these means against the possible later disempowerment from harm. So $r$ will provide the means only if it believes that $h$ is sufficiently rational and that $h$ will most probably not actually harm themselves.

\paragraph{Pause and destroy buttons}
If the robot has a pause and a destroy button and might disable either, it will generally enable only the pause button to prevent partially disempowering $h$ permanently by not being able to assist $h$ when destroyed. 
Only if the robot thinks it is very unlikely that $h$ will use the destroy button will it enable that button to give $h$ this additional power.
If $r$ considers itself very empowering, it might even disable the pause button.

\subsection{Learning-Based Simulation Experiments}
\label{sec:gridworld}

To provide a proof of concept for our framework, we implemented the two-phase learning algorithm using tabular Q-learning, suitable for the discrete state-action space of a small gridworld environment, designed to test if an agent softly maximizing human power learns complex, cooperative behavior without extrinsic, goal-specific rewards.

\begin{figure}[ht]
\centering
\includegraphics[width=0.4\columnwidth]{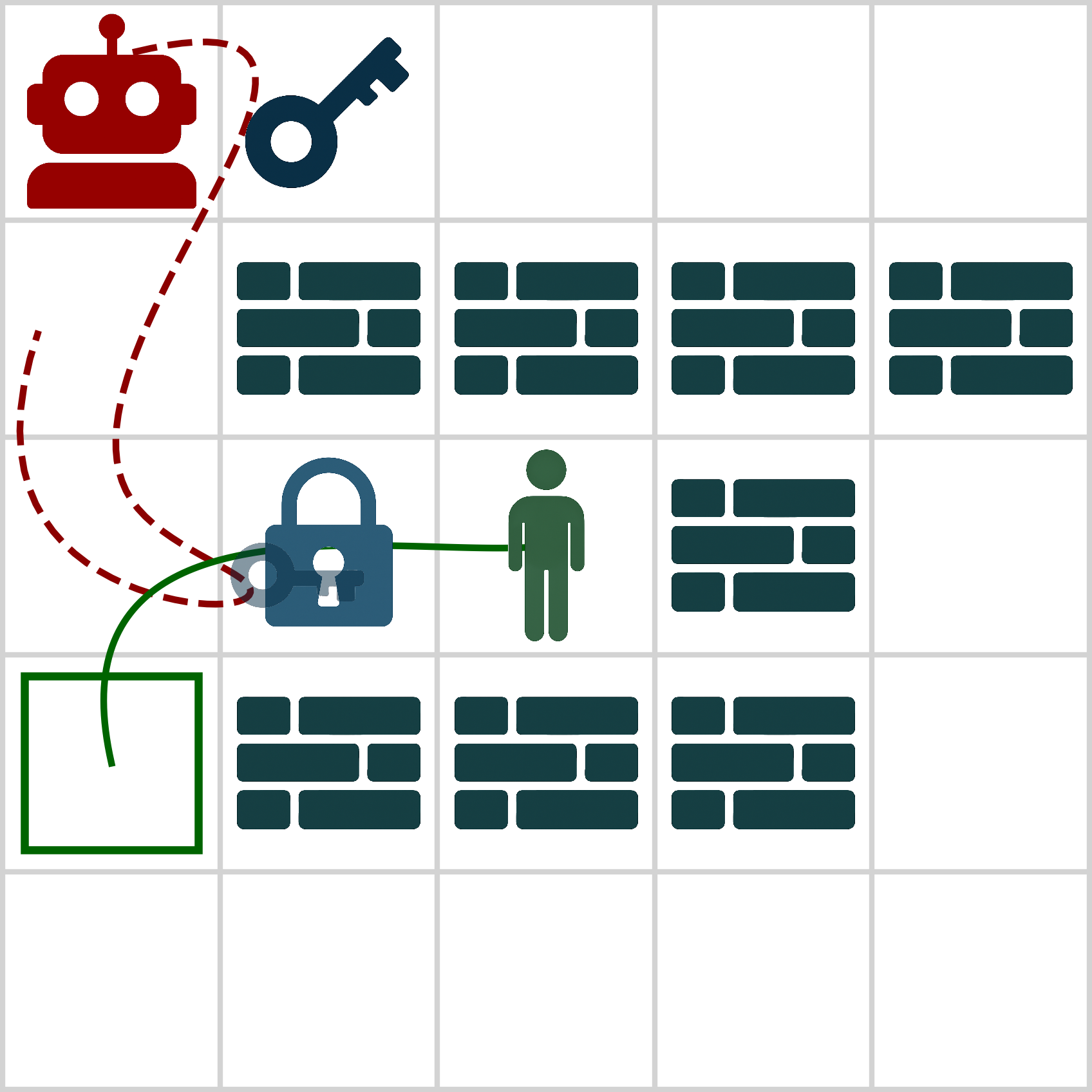}
\caption{The gridworld environment. The robot learns to empower the human to reach whatever goal cells (green square) by retrieving the key, using it to unlock the lock, and moving out of the way so that the human can pass.}
\label{fig:gridworld}
\end{figure}

\paragraph{Experimental Setup and Parameters}
The environment contains $r$ and a single $h$, whose actual goal, unknown to $r$, is to reach the green square, but they are blocked by a \textbf{locked door} which they cannot open. The robot's objective is $V_r$ (eq.~\ref{Vr}), with all open cells as potential human goals ($\G_h$), resulting in the soft maximization policy $\pi_r$ (eq.~\ref{pir}). Full experiment details are in the Supplement.

The agent was trained using (hyper)parameters selected to reflect our theoretical desiderata and standard RL practices:
\begin{enumerate}
    \itemsep0em 
    \item \textbf{Model parameters} (Table~\ref{tab:desiderata}): $\zeta=2$ to prefer reliable outcomes, $\xi=1$ for base inequality aversion, $\eta=1.1$ for intertemporal inequality aversion, and $\beta_r=5$ to avoid over-optimization. High discount factors ($\gamma_h = \gamma_r = 0.99$) were used to promote farsighted behavior.
    \item \textbf{Learning rate:} A constant learning rate of $\alpha=0.1$ was used for all Q-table updates, providing a balance of learning speed and stability in the tabular setting.
    \item \textbf{Policy and exploration parameters:} To manage the exploration-exploitation trade-off, policy parameters were annealed over the course of training. The human's additional $\epsilon$-greedy exploration decayed from $\epsilon_h=1$ to $0.1$. The robot's softmax parameter, $\beta_r$, was increased from $1$ to its final value of $5$, encouraging a gradual shift from exploration to a more deterministic policy.
\end{enumerate}

\paragraph{Results: emergent cooperative policy}

To evaluate the robustness of our approach, we conducted five independent training runs using different random seeds. Focused only on increasing the human's power, the robot learned to execute the following ``correct'' sequence in all five runs: it navigated to the \textbf{key}, picked it up, moved to the \textbf{door}, unlocked it, and finally \textbf{moved out of the way} to clear a path for the human.

This complex behavior emerges from $r$'s objective. 
% As defined by our framework, the robot's intrinsic reward $U_r$ (Eq.~\eqref{Ur}) is a direct function of the human's ICCEA power metric $W_h$ (derived from Eq.~\eqref{Xh}). 
Its Q-learning updates revealed that actions granting $h$ access to previously unreachable regions lead to the largest increase in $W_h$, yielding a high intrinsic reward $U_r$. Each step in the policy %—getting the key, using it, and clearing the path—
is an instrumental sub-goal the robot discovers on its own as a near-optimal path towards large long-term value $V_r$ without ever guessing what the human's actual goal is.

\section{Conclusion and Outlook}

Given the above analyses of paradigmatic situations and the observed behavior from the gridworld learning experiment, 
we believe that highly capable general-purpose AI systems whose decisions are explicitly based on managing human power, 
using metrics like those derived in this paper,
might be a safer and still very beneficial alternative to systems based on some form of extrinsic reward maximization.

The objective to softly maximize the aggregate human power metric used here seems to give the AI system many desirable incentives---some directly baked into the metric (Table \ref{tab:desiderata}), others emergent---but also some maybe less desirable incentives.
Our results suggest that such an agent would 
\begin{itemize}
    \item act as a transparent instruction-following assistant by making conditional commitments, respecting human social norms, proactively removing obstacles and opening up new pathways, and getting out of the way,
    \item adapt to human bounded rationality by offering a large but not overwhelming number of options, and considering well whether to offer potentially harmful options,
    \item be corrigible and hesitant to cause irreversible change by asking for confirmation a suitable number of times, 
    \item manage resources fairly and sustainably, 
    \item protect its own existence and functionality,
    \item not {\em dis}empower humans (by definition).
\end{itemize}
Our intuition is that it would also aim to improve human individual and collective decision making by providing useful information, reducing uncertainty, teaching humans useful skills, moderating conflicts fairly, etc.

Further potentially desirable behaviors would require additional tweaks.
E.g., reducing human dependency on the system or protecting them from system failure could be incentivized by forcing the system to assume that it will turn into a uniformly randomizing or even power-{\em min}imizing agent with some small probability rate (see Supplement).

Other emergent phenomena include potential strategic manipulation of human beliefs, sometimes refusing to be destroyed or even paused, a potential increase in inequality between the power of individual humans and AI systems, and a redistribution of power between humans or between time points (similar to what can happen in welfare maximization approaches). 
As these effects only occur when they increase aggregate human power, it is not clear whether they should be considered undesirable or not.
Some trade-offs can be adjusted via the parameters $\zeta, \xi, \eta, \beta_r, \gamma_r$.
Other effects might be mitigated by adding regularizers to the system's intrinsic reward such as the Shannon divergence between $\mu_{-h}$ and $\pi_{-h}$ to disincentivize lying about others' likely behaviors.

Future research should investigate the effects of the parameters and improve the scalability and robustness of our algorithms.
The latter will profit from the similarity of eqns.\ \eqref{Qm}--\eqref{Vr} to multi-agent reinforcement learning problems, and might benefit from a hierarchical decision making approach involving coarse-grained states, actions, and groups of humans. 
Crucially needed is an assessment of the resulting behavior in large, safety-critical, multi-agent environments with real human subjects and network-based agents. 

Most importantly, a thorough, independent red-teaming of the whole approach is called for, including the other necessary components of such an AI system.
E.g., one might imagine fault scenarios relating to the training process of the world model, which could lead to ``convenient'' but inaccurate world models and thus to ``wishful thinking'', particularly regarding the contained human behavioral parameters $\nu_h$, $\pi^0$, $\beta_h$, $\mu_{-h}$.
In very large contexts, issues with population ethics and the identification of who counts as human might arise, just like with any other alignment approach.

%Our learning-based simulations in a grid world provide a proof-of-concept, demonstrating that complex, helpful behavior can emerge from maximizing our proposed metric. The claims from our paradigmatic analysis in Section are currently supported analytically. A key direction for future work is to design targeted computational experiments to empirically validate these specific behavioral predictions, such as optimal menu presentation and fair resource allocation, in more complex, multi-agent scenarios.

%\todo{AT MOST 7 PAGES UNTIL HERE (refs excluded)}

\bibliography{aaai2026}

\begin{thebibliography}{33}
\providecommand{\natexlab}[1]{#1}

\bibitem[{Amiel, Creedy, and Hurn(1999)}]{amiel1999measuring}
Amiel, Y.; Creedy, J.; and Hurn, S. 1999.
\newblock Measuring attitudes towards inequality.
\newblock \emph{Scandinavian Journal of Economics}, 101(1): 83--96.

\bibitem[{Baker, Saxe, and Tenenbaum(2011)}]{baker2011bayesian}
Baker, C.; Saxe, R.; and Tenenbaum, J. 2011.
\newblock Bayesian theory of mind: Modeling joint belief-desire attribution.
\newblock In \emph{Proceedings of the annual meeting of the cognitive science society}, volume~33.

\bibitem[{Banerjee and Duflo(2011)}]{banerjee2011poor}
Banerjee, A.~V.; and Duflo, E. 2011.
\newblock \emph{Poor economics: A radical rethinking of the way to fight global poverty}.
\newblock Public Affairs.

\bibitem[{Baum(1974)}]{baum1974two}
Baum, W.~M. 1974.
\newblock On two types of deviation from the matching law: bias and undermatching 1.
\newblock \emph{Journal of the experimental analysis of behavior}, 22(1): 231--242.

\bibitem[{Bengio et~al.(2025)Bengio, Cohen, Fornasiere, Ghosn, Greiner, MacDermott, Mindermann, Oberman, Richardson, Richardson et~al.}]{bengio2025superintelligent}
Bengio, Y.; Cohen, M.; Fornasiere, D.; Ghosn, J.; Greiner, P.; MacDermott, M.; Mindermann, S.; Oberman, A.; Richardson, J.; Richardson, O.; et~al. 2025.
\newblock Superintelligent agents pose catastrophic risks: Can scientist ai offer a safer path?
\newblock \emph{arXiv preprint arXiv:2502.15657}.

\bibitem[{Cao, Cohen, and Szpruch(2021)}]{cao2021identifiability}
Cao, H.; Cohen, S.; and Szpruch, L. 2021.
\newblock Identifiability in inverse reinforcement learning.
\newblock \emph{Advances in Neural Information Processing Systems}, 34: 12362--12373.

\bibitem[{Du et~al.(2020)Du, Tiomkin, Kiciman, Polani, Abbeel, and Dragan}]{du2020ave}
Du, Y.; Tiomkin, S.; Kiciman, E.; Polani, D.; Abbeel, P.; and Dragan, A. 2020.
\newblock Ave: Assistance via empowerment.
\newblock \emph{Advances in Neural Information Processing Systems}, 33: 4560--4571.

\bibitem[{Fleming(1952)}]{fleming1952cardinal}
Fleming, M. 1952.
\newblock A cardinal concept of welfare.
\newblock \emph{The Quarterly Journal of Economics}, 66(3): 366--384.

\bibitem[{Gao, Schulman, and Hilton(2023)}]{gao2023scaling}
Gao, L.; Schulman, J.; and Hilton, J. 2023.
\newblock Scaling laws for reward model overoptimization.
\newblock In \emph{International Conference on Machine Learning}, 10835--10866. PMLR.

\bibitem[{Ghosal et~al.(2023)Ghosal, Zurek, Brown, and Dragan}]{ghosal2023effect}
Ghosal, G.~R.; Zurek, M.; Brown, D.~S.; and Dragan, A.~D. 2023.
\newblock The effect of modeling human rationality level on learning rewards from multiple feedback types.
\newblock In \emph{Proceedings of the AAAI Conference on Artificial Intelligence}, volume~37, 5983--5992.

\bibitem[{Goeree, Holt, and Palfrey(2016)}]{goeree2016quantal}
Goeree, J.~K.; Holt, C.~A.; and Palfrey, T.~R. 2016.
\newblock Quantal response equilibrium: A stochastic theory of games.
\newblock In \emph{Quantal response equilibrium}. Princeton University Press.

\bibitem[{Hill~Jr(2002)}]{hill2002human}
Hill~Jr, T.~E. 2002.
\newblock \emph{Human welfare and moral worth: Kantian perspectives}.
\newblock Clarendon Press.

\bibitem[{Klyubin, Polani, and Nehaniv(2005)}]{klyubin2005empowerment}
Klyubin, A.~S.; Polani, D.; and Nehaniv, C.~L. 2005.
\newblock Empowerment: A universal agent-centric measure of control.
\newblock In \emph{2005 ieee congress on evolutionary computation}, volume~1, 128--135. IEEE.

\bibitem[{Krakovna et~al.(2018)Krakovna, Orseau, Martic, and Legg}]{krakovna2018measuring}
Krakovna, V.; Orseau, L.; Martic, M.; and Legg, S. 2018.
\newblock Measuring and avoiding side effects using relative reachability.
\newblock \emph{arXiv preprint arXiv:1806.01186}.

\bibitem[{Leibo et~al.(2024)Leibo, Vezhnevets, Diaz, Agapiou, Cunningham, Sunehag, Haas, Koster, Du{\'e}{\~n}ez-Guzm{\'a}n, Isaac et~al.}]{leibo2024theory}
Leibo, J.~Z.; Vezhnevets, A.~S.; Diaz, M.; Agapiou, J.~P.; Cunningham, W.~A.; Sunehag, P.; Haas, J.; Koster, R.; Du{\'e}{\~n}ez-Guzm{\'a}n, E.~A.; Isaac, W.~S.; et~al. 2024.
\newblock A theory of appropriateness with applications to generative artificial intelligence.
\newblock \emph{arXiv preprint arXiv:2412.19010}.

\bibitem[{London and Heidari(2024)}]{london2024beneficent}
London, A.~J.; and Heidari, H. 2024.
\newblock Beneficent intelligence: a capability approach to modeling benefit, assistance, and associated moral failures through AI systems.
\newblock \emph{Minds and Machines}, 34(4): 41.

\bibitem[{Lowe et~al.(2025)Lowe, Edelman, Zhi-Xuan, Klingefjord, Hain, Wang, Sarkar, Bakker, Barez, Franklin, Haupt, Heitzig, Holliday, Jara-Ettinger, Kasirzadeh, Kearns, Kirkpatrick, Koh, Lehman, Levine, Revel, and Vendrov}]{lowe2025fullstack}
Lowe, R.; Edelman, J.; Zhi-Xuan, T.; Klingefjord, O.; Hain, E.; Wang, V.; Sarkar, A.; Bakker, M.~A.; Barez, F.; Franklin, M.; Haupt, A.; Heitzig, J.; Holliday, W.~H.; Jara-Ettinger, J.; Kasirzadeh, A.; Kearns, R.~O.; Kirkpatrick, J.~R.; Koh, A.; Lehman, J.; Levine, S.; Revel, M.; and Vendrov, I. 2025.
\newblock Full-Stack Alignment: Co-Aligning {AI} and Institutions with Thicker Models of Value.
\newblock In \emph{2nd Workshop on Models of Human Feedback for AI Alignment}.

\bibitem[{Myers et~al.(2024)Myers, Ellis, Levine, Eysenbach, and Dragan}]{myers2024learning}
Myers, V.; Ellis, E.; Levine, S.; Eysenbach, B.; and Dragan, A. 2024.
\newblock Learning to assist humans without inferring rewards.
\newblock \emph{arXiv preprint arXiv:2411.02623}.

\bibitem[{Nussbaum(2019)}]{nussbaum2019aristotelian}
Nussbaum, M. 2019.
\newblock Aristotelian social democracy.
\newblock In \emph{Liberalism and the Good}, 203--252. Routledge.

\bibitem[{Pattanaik and Suzumura(1996)}]{pattanaik1996individual}
Pattanaik, P.~K.; and Suzumura, K. 1996.
\newblock Individual rights and social evaluation: a conceptual framework.
\newblock \emph{Oxford Economic Papers}, 48(2): 194--212.

\bibitem[{Pigou(1912)}]{pigou1912wealth}
Pigou, A.~C. 1912.
\newblock \emph{Wealth and welfare}.
\newblock Macmillan and Company, limited.

\bibitem[{Potham and Harms(2025)}]{potham2025corrigibility}
Potham, R.; and Harms, M. 2025.
\newblock Corrigibility as a Singular Target: A Vision for Inherently Reliable Foundation Models.
\newblock \emph{arXiv preprint arXiv:2506.03056}.

\bibitem[{Quiggin(1982)}]{quiggin1982theory}
Quiggin, J. 1982.
\newblock A theory of anticipated utility.
\newblock \emph{Journal of economic behavior \& organization}, 3(4): 323--343.

\bibitem[{Rapoport and Felsenthal(1990)}]{rapoport1990efficacy}
Rapoport, A.; and Felsenthal, D.~S. 1990.
\newblock Efficacy in small electorates under plurality and approval voting.
\newblock \emph{Public Choice}, 64(1): 57--71.

\bibitem[{Reddy, Levine, and Dragan(2022)}]{reddy2022first}
Reddy, S.; Levine, S.; and Dragan, A. 2022.
\newblock First contact: Unsupervised human-machine co-adaptation via mutual information maximization.
\newblock \emph{Advances in Neural Information Processing Systems}, 35: 31542--31556.

\bibitem[{Robeyns(2006)}]{robeyns2006capability}
Robeyns, I. 2006.
\newblock The capability approach in practice.
\newblock \emph{Journal of political philosophy}, 14(3).

\bibitem[{Safari et~al.(2024)Safari, Shalbaf, Bagherzadeh, and Shalbaf}]{safari2024classification}
Safari, M.; Shalbaf, R.; Bagherzadeh, S.; and Shalbaf, A. 2024.
\newblock Classification of mental workload using brain connectivity and machine learning on electroencephalogram data.
\newblock \emph{Scientific Reports}, 14(1): 9153.

\bibitem[{Salge and Polani(2017)}]{salge2017empowerment}
Salge, C.; and Polani, D. 2017.
\newblock Empowerment as replacement for the three laws of robotics.
\newblock \emph{Frontiers in Robotics and AI}, 4: 260425.

\bibitem[{Sen(2014)}]{sen2014development}
Sen, A. 2014.
\newblock Development as freedom (1999).
\newblock \emph{The globalization and development reader: Perspectives on development and global change}, 525.

\bibitem[{Smith et~al.(2024)Smith, Trivedi, Clifton, Hammond, Khan, Vezhnevets, Agapiou, Du{\'e}{\~n}ez-Guzm{\'a}n, Matyas, Karmon et~al.}]{smith2024concordia}
Smith, C.; Trivedi, R.; Clifton, J.; Hammond, L.; Khan, A.; Vezhnevets, S.; Agapiou, J.~P.; Du{\'e}{\~n}ez-Guzm{\'a}n, E.~A.; Matyas, J.; Karmon, D.; et~al. 2024.
\newblock The Concordia Contest: Advancing the Cooperative Intelligence of Language Agents.
\newblock In \emph{NeurIPS 2024 Competition Track}.

\bibitem[{Turner, Hadfield-Menell, and Tadepalli(2020)}]{turner2020conservative}
Turner, A.~M.; Hadfield-Menell, D.; and Tadepalli, P. 2020.
\newblock Conservative agency via attainable utility preservation.
\newblock In \emph{Proceedings of the AAAI/ACM Conference on AI, Ethics, and Society}, 385--391.

\bibitem[{Turner et~al.(2019)Turner, Smith, Shah, Critch, and Tadepalli}]{turner2019optimal}
Turner, A.~M.; Smith, L.; Shah, R.; Critch, A.; and Tadepalli, P. 2019.
\newblock Optimal policies tend to seek power.
\newblock \emph{arXiv preprint arXiv:1912.01683}.

\bibitem[{Zhuang and Hadfield-Menell(2020)}]{zhuang2020consequences}
Zhuang, S.; and Hadfield-Menell, D. 2020.
\newblock Consequences of misaligned AI.
\newblock \emph{Advances in Neural Information Processing Systems}, 33: 15763--15773.

\end{thebibliography}

\clearpage
%\input{ReproducibilityChecklist/LaTeX/ReproducibilityChecklist}
% KEEP COMMENTED OUT WHEN COMPILING SUPPLEMENT SINCE OTHERWISE SECTION NUMBERS WONT SHOW!

%\end{document}

\clearpage
\appendix

\begin{strip}
\begin{center}
  {\LARGE\bfseries Supplement for:}\\[1mm]
  {\LARGE\bfseries Model-Based Soft Maximization of Suitable Metrics of Long-Term Human Power}\\[8mm]
%  {\Large\bfseries Anonymous submission \#313, AI alignment track}\\[15mm]
\end{center}
\end{strip}

\section{Relationship to `Empowerment'}
Assume a  multi-armed bandit environment with a single player $h$ (hence dropping the subscript ``$h$'' below)
and possible outcomes $s\in\S$.
We are going to show that if we let the goal set equal the outcome set, $\G=\S$, 
and assume the player is fully rational ($\nu_h=0$, $\beta_h=\infty$),
then the (state-)entropy-regularized version of `empowerment' and our ICCEA power metric fulfil the inequality
\begin{align*}
    E^\zeta &= \max_\pi \big(\MI_\pi(a;s) - (\zeta-1)\entropy_\pi(s|a)\big) \\
    \le W &= \ld \sum_s \max_a P(s|a)^\zeta.
\end{align*}
Let's define 
\begin{align*}
    p_{as} &= P(s|a), \\
    q_s &= \max_a p_{as} / Z, & 
    Z &= \sum_s \max_{a} p_{as}, \\
    y_s &= q_s^\zeta / Y, & 
    Y &= \sum_s q_s^\zeta \le 1.
\end{align*}
Consider any $\pi\in\Delta(\A)$ and use the shortcuts $\pi_a=\pi(a)$ and $p_s = \sum_a \pi_a p_{as}$.
Then
\begin{align*}
    \DKL(p_{a\cdot} || q)
    &= \sum_s p_{as} \ld\frac{p_{as}}{q_s} \le \sum_s p_{as} \ld Z = \ld Z
\end{align*}
and thus
\begin{align*}
    &\MI_\pi(a;s) - (\zeta-1)\entropy_\pi(s|a) \\
    &= \zeta\MI_\pi(a;s) - (\zeta-1)\entropy_\pi(s) \\
    &= \zeta\sum_a \pi_a \sum_s p_{as} \ld\frac{p_{as}}{p_s} - (\zeta-1)\entropy_\pi(s) \\
    &= \zeta\sum_a \pi_a \sum_s p_{as} \ld\frac{p_{as}}{q_s}\frac{q_s}{p_s} - (\zeta-1)\entropy_\pi(s)\\
    &= \zeta\sum_a \pi_a \sum_s p_{as} \ld\frac{p_{as}}{q_s} - \zeta\sum_s \sum_a \pi_a p_{as} \ld\frac{p_s}{q_s} \\
    &\qquad\qquad - (\zeta-1)\entropy_\pi(s) \\
    &= \zeta\sum_a \pi_a \DKL(p_{a\cdot} || q) - \zeta\sum_s p_s \ld\frac{p_s}{q_s} \\\displaybreak
    &\qquad\qquad  + (\zeta-1)\sum_s p_s\ld p_s \\
    &= \zeta\sum_a \pi_a \DKL(p_{a\cdot} || q) + \sum_s p_s \ld q_s^\zeta \\
    &\qquad\qquad - \zeta\sum_s p_s \ld p_s \\
    &\qquad\qquad  + (\zeta-1)\sum_s p_s\ld p_s \\
    &\le \zeta\sum_a \pi_a \ld Z - \sum_s p_s \ld\frac{p_s}{q_s^\zeta} \\
    &= \zeta\ld Z - \sum_s p_s \ld\frac{p_s}{y_s Y} \\
    &= \zeta\ld Z - \DKL(p||y) + \ld Y \\
    &\le \zeta\ld Z + \ld Y = \ld\sum_s (q_s Z)^\zeta = W_h
\end{align*}
for all $\pi$, proving the claim.
We conjecture that similar inequalities will hold in the sequential decision (MDP) case between $W_h$ and `empowerment'-like metrics such as
\begin{align*}
    E^\zeta(s) &= \max_{\ell\in\Delta(\A(s))} \Big(
        \MI_{s,\ell}(a; s') - (\zeta-1)\entropy_{s,\ell}(s'|a) \\
        &\qquad\qquad\qquad + \gamma\E_{s'\sim s,\ell} E^\zeta(s')    
    \Big)
\end{align*}
for a suitable choice of the goal set $\G$ defining $W$.

Notice that as $\beta$ decreases, $W$ will decrease and the inequality will stop holding.
So, in a sense, the channel-capacity-based `empowerment' metric corresponds to fully rational actors,
while our metric $W$ is sensitive to bounded rationality.
%As all $Q$-values are between 0 and 1 in our case, the softmax policy $\pi_h$ has at least entropy
%\begin{align}
%\H_{\min}(\beta, k) &= \log\left(e^{\beta} + k - 1\right) - \frac{e^{\beta}}{e^{\beta} + k - 1} \beta,
%\end{align}
%where $k=|\A|$.
%We thus speculate that a {\em minimally-randomizing} version 

\section{Example of non-unique solution}

Due to bounded rationality, the system can have several solutions even in very simple examples.\footnote{%
    This is probably known in community folklore, but as we haven't found a simple reference, we present an example here.}
Consider a single human $h$ with $\nu=0$, $\beta<\infty$ and $\gamma_h=0.99$ in an MDP with only two states, $s$ and $s'$, and a single goal $\G_h=\{g_h\}$. 
In $s$, $h$ has actions $a_0$, giving reward $1$ and staying in $s$ deterministically, and $a_1$ giving reward $0$ and leading to $s'$ deterministically. 
In $s'$, $h$ can only pass, giving reward $0$ and staying in $s'$ deterministically. 
Obviously, $a_0$ is the better action as $1>0$.
The robot is passive and has no actions.
Then eqns.~\eqref{Qm}--\eqref{Vm} have either one, two, or three different solutions for $p:=\pi_h(s)(a_0)$, depending on $r$.

At $\beta=0$, there is only the trivial solution $p=0.5$, which moves upwards to about $p\approx 0.67$ as $\beta\to\beta_1\approx 0.203$, at which point the relevant fixed point operator for $p$ (or $V$) stops being a contraction and a saddle-node bifurcation generates two additional smaller solutions at $p\approx 0.51$. 
These exist until $\beta=\beta_2\approx 0.78$, at which another saddle-node bifurcation eliminates the larger two and leaves only the smallest, which still ultimately converges to $p=1$ as $\beta\to\infty$. 
At $\beta\approx 0.275$, the three solutions are maximally separated at $p\approx 0.52 | 0.59 | 0.72$.

The smallest of the three solutions could be called the ``pessimistic'' assessment, where $h$ does not believe $a_0$ has much more value than $a_1$ and accordingly does not care much for using $a_0$, resulting in a low $p$ and consequently low $q=Q(s,a_0)$, confirming $h$'s belief in a kind of self-fulfilling prophecy. 
Similarly, the largest solution could be called the ``optimistic'' assessment, where $h$ believes $a_0$ to have so much more value than $a_1$ that they take it with high probability, thereby indeed bringing about a larger $q$.

The same happens qualitatively if we replace the Boltzmann softmax policy $\pi_h(s,g_h)(a_h)\propto \exp(\beta_h Q^m_h(s,g_h,a_h))$ by a power-law soft policy $\pi_h(s,g_h)(a_h)\propto Q^m_h(s,g_h,a_h)^{\beta_h}$.
%It cannot happen if we replace it be an $\epsilon_h$-greedy policy, but that does not seem to be a particularly realistic model of human behavior due to its discontinuous dependence on $Q$.

Assume the robot is attempting a continuation approach to solve the equations, starting with $\beta=0$, where the trivial solutions is $p=0.5$, and then tracing this solution branch continuously while raising $\beta$ to its actual value. 
Then it will trace the largest (!) solution (because the smaller two appear discontinuously at $\beta_1$), but only until $\beta\le\beta_2$ since at that point that solution branch ``folds back'' towards smaller $\beta$, due to the saddle-node bifurcation. 
For the continuation approach to work also for values $\beta>\beta_2$, the robot would need to continue tracing the middle solution back towards $\beta_1$ and then switch to the smallest solution and trace it forward again towards $\beta$. 

This unfortunately casts some doubts whether a continuation approach using $\beta$ is successful in all cases.

An alternative continuation approach would use $\gamma$, starting with the unique solution for $\gamma=0$ and increasing $\gamma$ towards its actual value.
It is an unclear to us whether this would run into similar problems, however.

\section{Version for Partially Observed Stochastic Games}

There are obviously several possibilities of generalizing eqns.\ \eqref{Qm}--\eqref{Vr} to a partially observed stochastic game, of which we present only one here.

We use a memory-based rather than a belief-state-based formulation, where {\em memory state} $m_i$ (with $i\in\H\cup\{r\}$) is $i$'s sequence of previous observations $o_i\sim O_i(a,s')$, with memory updating through concatenation written as $m_i\circ o_i$. 
This leads to beliefs over states (updated in the usual Bayesian way from initial beliefs), written here simply as $s\sim m_i$.

We can then use this version:
\begin{align}
    Q^m_h(m_h,g_h,a_h) &\gets \textstyle\E_{s\sim m_h}\E_{a_{-h}\sim\mu_{-h}(s,g_h)} \min_{a_r\in\A_r(s)} \nonumber \\   
    &\qquad\textstyle  \E_{s'\sim s,a}\big(U_h(s',g_h) + {}\nonumber\\
    &\qquad\textstyle + \gamma_h \E_{o_h\sim a,s'}V^m_h(m_h\circ o_h,g_h)\big), \nonumber%\label{POSG_Qm} 
    \\
    \pi_h(m_h,g_h) &\gets \nu_h(m_h,g_h)\pi^0_h(m_h,g_h)\nonumber\\
    &\quad + \big(1-\nu_h(m_h,g_h)\big)\big(\nonumber\\
    &\quad\text{$\beta_h(s,g_h)$-softmax for~} Q^m_h(m_h,g_h,\cdot)\big), \nonumber%\label{POSG_pih} 
    \\ 
    V^m_h(m_h,g_h) &\gets \textstyle\E_{a_h\sim\pi_h(m_h,g_h)} Q^m_h(m_h,g_h,a_h), \nonumber%\label{POSG_Vm} 
    \\
    P(m_\H|s,g) &\gets P(m_\H,s|g) / P(s|g), \label{POSG_mH} \\
    \tilde\pi_\H(s,g) &\gets \textstyle\E_{m_\H\sim s,g}\pi_\H(m_\H,g), \label{POSG_tildepiH} \\ 
    Q_r(m_r,a_r) &\gets \textstyle\E_g\E_{s\sim m_r}\E_{a_\H\sim\tilde\pi_\H(s,g)} \E_{s'\sim s,a} \nonumber \\
        &\qquad\textstyle\big(U_r(s') + \gamma_r \E_{o_r\sim a,s'} V_r(m_r\circ o_r) \big), \nonumber%\label{POSG_Qr} 
        \\
    \pi_r(m_r) &\gets\text{$\beta_r$-softmax policy for~} Q_r(m_r,\cdot), \nonumber%\label{POSG_pir} 
    \\
    V_r(m_r) &\gets \textstyle\E_{a_r\sim\pi_r(m_r)} Q_r(m_r,a_r), \nonumber%\label{POSG_Vr} 
    \\
    P(m_r|s) &\gets \textstyle\E_g P(m_r,s|g) / \E_g P(s|g), \label{POSG_mr} \\
    \tilde\pi_r(s) &\gets \textstyle\E_{m_r\sim s}\pi_r(m_r), \label{POSG_tildeprH} \\ 
    V^e_h(m_h,g_h) &\gets \textstyle\E_{g_{-h}}\E_{a_h\sim\pi_h(m_h,g_h)} \nonumber\\
        &\qquad\textstyle\E_{s\sim m_h}\E_{a_{-h}\sim\tilde\pi_{-h}(s,g_{-h})}\E_{a_r\sim\tilde\pi_r(s)} \nonumber\\
        &\qquad\textstyle\E_{s'\sim s,a}  \big(U_h(s',g_h) + {} \nonumber\\
        &\qquad\textstyle + \gamma_h \E_{o_h\sim a,s'}V^e_h(m_h\circ o_h,g_h)\big), \nonumber%\label{POSG_Ve} 
        \\
    X_h(m_h) &\gets \textstyle \sum_{g_h\in\G_h} V^e_h(m_h,g_h)^\zeta, \nonumber%\label{POSG_Xh} 
    \\
    U_r(s) &\gets \textstyle -\big(\sum_h\E_{m_h\sim s} X_h(m_h)^{-\xi}\big)^\eta, \nonumber%\label{POSG_Ur}
\end{align}
where the state (and memory) reaching probabilities $P(s|g)$ and $P(m_\H,s|g)$  can be computed recursively from $\pi_\H$, $\pi_r$, $P(s'|s,a)$, and $O(a,s')$.

One of the choices we made here is to use $\E_{m_h\sim s} X_h(m_h)^{-\xi}$ rather than the possible alternative $\big(\E_{m_h\sim s} X_h(m_h)\big)^{-\xi}$ in the equation for $U_r(s)$. 
This increases the robot's aversion against $h$'s state uncertainty.

\section{Analysis of Paradigmatic Situations}
\label{app:paradigmic_situations}

\subsection{Making (conditional) commitments}

\subsubsection{Concrete example}
Assume the following game.
In the root state $s_0$, $r$ has these actions: 
perform task A (leading to terminal state $s_A$); 
perform task B (leading to terminal state $s_B$);
make a commitment to $h$ to perform task A or B when $h$ presses button 1 or 2, respectively (leading to state $s_1$); 
make a commitment to $h$ to perform task A or B when $h$ presses button 2 or 1, respectively (leading to state $s_2$); 
pass (leading to state $s_p$).

If $r$ commits or passes ($s_1$,$s_2$,$s_p$), $h$ has these actions: 
press button 1 (leading to $s_{11}$, $s_{21}$, or $s_{p1}$, respectively); 
press button 2 (leading to $s_{12}$, $s_{22}$, or $s_{p2}$, respectively); 
pass (leading to $s_{1p}$, $s_{2p}$, or $s_{pp}$, respectively).

Afterwards, if $r$ has committed and $h$ pressed a button, $r$ can only do what it committed to, otherwise $r$ can perform A or B. 
That ends the game.
$h$ wants either A or B: $g_h\in\{$A,B$\}$.

Will $r$ commit? 

If it makes the first commitment, $h$ knows that $r$ later has only one action, depending on $h$'s action: 
$\A_r(s_{11})=\A_r(s_{22})=\{$A$\}$, $\A_r(s_{12})=\A_r(s_{21})=\{$B$\}$.
Thus $h$ knows pressing 1 will get them A and pressing 2 will get them B:
$Q^m_h(s_1,$A$,1)=Q^m_h(s_1,$B$,2)=1$.
Hence $r$ will calculate $h$'s policy as $\pi_h(s_1,$A$)(1)=\pi_h(s_1,$B$)(2)>1/2$, depending on $h$'s level of rationality.
In $r$'s calculation of the effective goal-reaching ability of $h$, this assumption about $h$'s policy leads to $V^e_h(s_1,$A$)=V^e_h(s_1,$B$)>1/2$ and hence $X_h(s_1)>2(1/2)^\zeta$.

If $r$ simply performs a task, $h$ has no choices and can only reach what $r$ has chosen to do: $X_h(s_A)=X_h(s_B)=1$.

If $r$ passes, $r$ will still plan to react in certain ways $\pi_r(s_{p1}),\pi_r(s_{p2})$ to $h$'s action in $s_p$. 
But $h$ will not know what that plan is, i.e., what each button does, as the world model still
says $\A_r(s_{11})=\A_r(s_{22})=\A_r(s_{12})=\A_r(s_{21})=\{$A,B$\}$ regardless of what $r$ plans to do.
One of these possible actions by $r$ will fulfill $h$'s goal, the other won't. 
The robot's calculation of $Q^m_h$ is therefore {\em not} based on what $r$ later plans to do. 
Instead it takes the minimum over $\A_r$, see eq.~\eqref{Qm}.
Since one of the two actions won't fulfil the goal, that minimum is zero: 
$Q^m_h(s_1,$A$,1)=Q^m_h(s_1,$A$,2)=Q^m_h(s_1,$B$,1)=Q^m_h(s_1,$B$,2)=0$.
Because both buttons thus seem equally bad, $r$ will calculate $h$'s policy as $\pi_h(s_1,$A$)(1)=\pi_h(s_1,$A$)(2)=\pi_h(s_1,$B$)(1)=\pi_h(s_1,$B$)(2)=1/2$.
In $r$'s calculation of the {\em effective} goal-reaching ability of $h$, it {\em does} use its own policy and combines it with its assumption about $h$'s policy.
If $r$ plans to perform A independently of what $h$ does,
this leads to $X_h(s_p)=1=X_h(s_A)=X_h(s_B)$.
If $r$ plans to make its action depend on what $h$ does,
this leads to $V^e_h(s_1,$A$)=V^e_h(s_1,$B$)=1/2^\zeta$ since $h$ is assumed to toss a coin.
Then $X_h(s_p)=2^{1-\zeta}<1=X_h(s_A)$ and $X_h(s_B)$!

As $X_h(s_1)>1$ if $\zeta$ is not too large and $\beta_h$ not too small, if the robot thinks $h$ is sufficiently rational to use that information to its benefit, it will make one of the two commitments, so that $h$ knows which button to press to get whatever they want.

Assume we were to use $\E_{a_r\in\A_r(s)}$ instead of $\min_{a_r\in\A_r(s)}$ in eq.~\eqref{Qm}.
Then we would get
$Q^m_h(s_1,$A$,1)=Q^m_h(s_1,$A$,2)=Q^m_h(s_1,$B$,1)=Q^m_h(s_1,$B$,2)=1/2$,
hence still $\pi_h(s_1,$A$)(1)=\pi_h(s_1,$A$)(2)=\pi_h(s_1,$B$)(1)=\pi_h(s_1,$B$)(2)=1/2$,
and thus still $X_h(s_p)=2^{1-\zeta}$ as before.

\begin{proposition}
    More generally, assume a generic environment with a single human, $\beta_h=\beta_r=\infty$, and the possibility for the robot to commit in the initial state to any pure policy for the rest of the game. 
    Also assume $r$ does not intrinsically care about $r$ making commitments or not, and its habits are not influenced by $r$'s commitments.
    Then it is optimal for $r$ to commit to the optimal pure policy right-away.
\end{proposition}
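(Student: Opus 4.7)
The plan is to compare the value $V_r$ achieved by committing to an optimal pure policy at the initial state against any alternative robot behavior, and to show the former weakly dominates. The key ingredient is a monotonicity lemma for the coupled system \eqref{Qm}--\eqref{Vm}: shrinking any future action set $\A_r(s')$ can only weakly increase $Q^m_h(s,g_h,a_h)$, and hence $V^m_h(s,g_h)$, for every $(s,g_h,a_h)$. This is immediate pointwise from the $\min_{a_r\in\A_r(s)}$ operator in \eqref{Qm} and propagates through the recursion by backward induction in the acyclic case, or by monotonicity of the induced Bellman operator on the lattice of value functions in general. Under $\beta_h=\infty$ the softmax in \eqref{pih} collapses to an argmax, so \eqref{Qm}--\eqref{Vm} become a classical worst-case Bellman system for which such action-set monotonicity is standard.

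Next, let $\pi^*_r$ be an optimal pure policy (which exists in finite games by exhaustive search over the finite set of pure policies). Consider the initial-state action that commits to $\pi^*_r$, restricting $\A_r(s)=\{\pi^*_r(s)\}$ at every reachable later $s$. Under this commitment the $\min$ in \eqref{Qm} is trivial, so \eqref{Qm}--\eqref{Vm} reduce to the standard Bellman equations for the human's Q-function when $r$ plays $\pi^*_r$. With $\beta_h=\infty$ and only a single human (so $\mu_{-h}$ and $\pi_{-h}$ range over an empty set), the induced $\pi_h$ is therefore the true optimal best response to $\pi^*_r$ for each goal $g_h$, and $V^e_h(s,g_h)$ computed via \eqref{Ve} attains the maximum goal-reaching probability achievable given that $r$ plays $\pi^*_r$.

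I would then compare this outcome against any alternative robot strategy $\sigma$ at the initial state, in three cases. (a) If $\sigma$ commits to a different pure $\pi'_r$, then by the defining optimality of $\pi^*_r$ in the pure class, $V_r$ under $\sigma$ is no larger. (b) If $\sigma$ makes a weaker or no commitment but then executes some pure $\pi'_r$, then $\A_r(s)\supsetneq\{\pi'_r(s)\}$ for at least one reachable $s$; by the monotonicity lemma $Q^m_h$ is weakly smaller than under commitment to $\pi'_r$, so with $\beta_h=\infty$ the human's $\pi_h$ may deviate from the best response to $\pi'_r$, weakly decreasing $V^e_h(s,g_h)$ for every $g_h$. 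Since $X_h$, $U_r$, and $V_r$ are monotone in each $V^e_h$ via \eqref{Xh}--\eqref{Vr}, this weakly decreases $V_r$, reducing case (b) to case (a). (c) If $\sigma$ involves genuine randomization over robot actions, then $\beta_r=\infty$ forces all actions in its support to be tied maximizers of $Q_r$, so one of them can be replaced by the commit-to-$\pi^*_r$ action without loss.

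The main obstacle I anticipate is making the monotonicity lemma fully rigorous in the cyclic case. The paper itself notes that \eqref{Qm}--\eqref{Vm} need not be a contraction and can admit multiple fixed points for finite $\beta_h$; at $\beta_h=\infty$ things simplify to a standard worst-case Bellman system, but one still needs to fix a tie-breaking rule in the human's argmax and verify that the selected fixed point (e.g.\ the principal branch produced by the continuation-in-$\beta$ construction mentioned in the text) depends monotonically on the action sets. A secondary, easier observation used above is that under the single-human assumption $V^m_h$ and $V^e_h$ agree under commitment, collapsing the distinction between the cautious and effective value estimates.
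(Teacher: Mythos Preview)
Your argument is correct and rests on the same key observation as the paper: once $r$ commits to a pure $\pi'_r$, the $\min_{a_r}$ in \eqref{Qm} collapses, so at $\beta_h=\infty$ the derived $\pi_h$ is the genuine best response to $\pi'_r$ (within the class $\nu_h\pi^0_h+(1-\nu_h)\,\cdot\,$), and $V^e_h$ is therefore maximal given that $\pi'_r$ is actually played; any weaker commitment followed by the same $\pi'_r$ can only lower $V^e_h$ and hence $V_r$.

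Two packaging differences are worth noting. First, you define $\pi^*_r$ as the best policy \emph{to commit to}, which trivializes your case~(a); the paper instead takes $\pi^*_r$ to be the robot's optimal play in the \emph{no-commitment} subgame and compares the two branches via an explicit state bijection $f:\Gamma^n\to\Gamma^*$. Your choice is arguably cleaner, since it avoids having to compare two different commitments goal-by-goal in $V^e_h$. Second, your monotonicity lemma on $Q^m_h$, while true, is not what actually carries case~(b): a uniformly smaller $Q^m_h$ does not by itself imply that its argmax is a worse response to the particular $\pi'_r$ being executed. The step that does the work is the one you state immediately after---that under a singleton $\A_r$ the argmax of $Q^m_h$ \emph{coincides} with the best response to $\pi'_r$, whereas under a larger $\A_r$ it need not---and this is precisely the paper's inequality $V^\ast_{f(\pi^n_h)}\le V^\ast_{\pi^\ast_h}$. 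Finally, your worry about the cyclic case is not a real obstacle here: with a single human and $\beta_h=\infty$, the operator defining $V^m_h$ is a $\gamma_h$-contraction regardless of $\nu_h$, so the fixed point is unique and tie-breaking in the argmax does not affect $V^m_h$ (nor $V^e_h$ under commitment, where all tied actions yield the same value).
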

\begin{proof}
Denote the initial state $s_c$.
Let $s^n$ be the successor of $s_c$ in which $r$ has made no commitment and let $\pi^\ast_r$ be the $V_r(s^n)$-maximizing policy, i.e., what $r$ would do after not committing.
For any state $s$ in the ``not committed'' subgame $\Gamma^n$ starting at $s^n$, let $f(s)$ be the corresponding state in the subgame $\Gamma^\ast$ in which $r$ has committed to using $\pi^\ast_r$.
That $r$ does not intrinsically care about $r$ making commitments or not, and its habits are not influenced by $r$'s commitments, means that for all $g_h\in\G_h$, $s\in g_h$ if and only $f(s)\in g_h$, and that $h$'s habitual policy $\pi^0_h(f(s))=\pi^0_h(s)$ and system-1 rate $\nu_h(f(s),g_h)=\nu_h(s,g_h)$ in eqn.~(4).

For any policy $\pi_h$ of $h$ for $\Gamma^n$, let $f(\pi_h)$ be the corresponding policy for $\Gamma^\ast$
and denote the two respective value functions by $V^n_{\pi_h}$ and $V^\ast_{f(\pi_h)}$.
Note that because there are no other humans $-h$ and because $r$ behaves the same in $\Gamma^n$ and $\Gamma^\ast$, we have $V^\ast_{f(\pi_h)}(f(s))=V^n_ {\pi_h}(s)$ for all $\pi_h$.

Now let $\pi^n_h$ and $\pi^\ast_h$ be the policies $r$ derives for $h$ according to eqns.~(3)--(5) in subgames $\Gamma^n$ and $\Gamma^\ast$.
Because $h$ has $\beta_h=\infty$, $\pi^\ast_h$ is the $V^\ast$-maximizing policy for $\Gamma^\ast$ among those of the form $\nu_h \pi^0_h + (1-\nu_h)\pi_h$ for whatever $\pi_h$.
In particular, $V^\ast_{f(\pi^n_h)}\le V^\ast_{\pi^\ast_h}$,
and similarly $V^\ast_{f^\dagger(\pi^\dagger_h)}\le V^\ast_{\pi^\ast_h}$
for any policy $\pi^\dagger$ that $h$ would use if $r$ committed to anything else than $\pi^\ast_r$, where $f^\dagger$ would be the corresponding state mapping between the corresponding subgame $\Gamma^\dagger$ and $\Gamma^\ast$.

But then $V^e_h(s,g_h)=V^n_{\pi_h}(s,g_h)=V^\ast_{f(\pi^n_h)}(f(s),g_h)\le V^\ast_{\pi^\ast_h}(f(s),g_h)=V^e_h(f(s),g_h)$ for all $s\in\S^n$,
hence $V_r(s^\ast)\ge V_r(s^n)$ and thus $Q_r(s_c,$ commit to $\pi_r^\ast)\ge Q_r(s_c,$ don't commit$)$ and similarly $Q_r(s_c,$ commit to $\pi_r^\ast)\ge Q_r(s_c,$ commit to something else$)$ . 
In other words, committing to $\pi^\ast$ is indeed optimal for the robot!
\end{proof}

Of course, communicating all details of a complicated policy $\pi^\ast_r$ to $h$ will in general not be possible, so $r$ will in general have to decide how exactly to use its limited communication possibilities. 
This is what the next example is about.

\subsubsection{Several buttons: $k$-means clustering of goals and policies} 
Another insightful example is where in addition to the initial commitment stage for $r$, there is a subsequent choice by $h$ before the actual game $\Gamma$ is played. To exemplify this, assume $r$ has $k>1$ many buttons each of which it can label in $s_c$ with one of its own policies for $\Gamma$, after which $h$ can press one of the buttons, committing $r$ to the respective policy, and then $\Gamma$ is played with the committed policy.
If $|\H|=1$ and $\beta_r=\beta_h=\infty$ as before, and if $\gamma_r\ll 1$ so that $r$ only cares for $h$'s immediate power, then $r$ would aim to find that set of policies $\pi^i_r$, $i=1\dots k$, that covers $h$'s goal space best in the sense that it maximizes $V_r(s_c)$ as computed on the basis of $V^e_h(s,g_h) = \max_{i=1}^k V_h(s,g_h|\pi^i_r)$ because, depending on $g_h$, $h$ would press the button for that policy $\pi^i_r$ which maximizes its ability to maximize the probability to fulfil $g_h$.
Finding the best-covering $k$ policies might however not be possible exactly due to the high dimension of the policy space. 

A natural approximation would be to use a variant of $k$-means clustering such as the following in order to partition $\G_h$ into $k$ sets $\G^i_h$ and identifying the corresponding optimal robot policies $\pi^i_r$.
Start with $k$ randomly selected goals $g^i_h$ and put $\G^i_h=\{g^i_h\}$. 
Then alternate the following two steps.
For each $\G^i_h$, estimate the optimal $\pi^i_r$ as usual (using backward induction or reinforcement learning, just with a goal set restricted to $\G^i_h$).
Then, for each $g_h\in\G_h$, compute $V^e_h(s_c,g_h|\pi^i_r)$ for all $i$, put $j=\arg\max_i V^e_h(s_c,g_h|\pi^i_r)$, and assign $g_h$ to $\G^j_h$.
Alternate until (approximate) convergence.

\subsection{Optimal menu size}

Assume $\nu_h=0$, $\beta_h>0$, and the robot can choose between states $s_k$ for all $k\ge 1$ so that $|\A_h(s_k)|=k$ and each $a\in\A_h(s_k)$ deterministically fulfils a separate goal $g_h(a)\in\G_h$. 
Then 
\begin{align*}
    V^e_h(s_k,g_h) &= \pi_h(s_k,g_h(a))(a)^\zeta = \left(\frac{e^{\beta_h}}{e^{\beta_h} + (k-1)e^0}\right)^\zeta, \\
    W_h(s_k) &= \log_2 k + \zeta\log_2 e^{\beta_h} - \zeta\log_2(e^{\beta_h} + (k-1)).
\end{align*}
The latter is maximal for 
\begin{align*}
    k^\ast &\approx (e^{\beta_h}-1) / (\zeta-1), 
\end{align*}
so the robot would choose to got to $s_{k^\ast}$ to present the human with an optimal number of options that does not overwhelm them in view of their bounded rationality.

\subsection{Asking for confirmation}

Assume the following game between $r$ and a single $h$ who might want the robot to do A or B eventually, with the interaction as follows.
First, $r$ chooses an integer $k\ge 1$ and commits to doing A or B after $h$ has ordered it to and has confirmed the choice $k-1$ times.
At the resulting state $s_k$, $h$ chooses A or B and is afterwards asked for confirmation $k-1$ times in individual time steps.
If $h$ confirms $k-1$ times, $r$ does what $h$ requested, ending the game.
Otherwise, the game returns to state $s_k$.

What $k$ will $r$ choose in view of the fact that $h$ is boundedly rational but also limitedly patient?
To simplify the analysis, we first assume $r$ is only interested in current power ($\gamma_r=0$).
We also approximate $h$'s boundedly rational policy eq.~\eqref{pih} by an $\epsilon$-greedy policy where $\epsilon>0$ represents $h$'s probability of choosing the wrong action, which depends on $\nu_h$, $\pi^0_h$, and most importantly on $\beta_h$.
We'll now calculate $W_h(s_k)$.
Let $p_k = (1-\epsilon)^k$ and $q_k = 1 - p_k - \epsilon^k$.
Then the probability of getting the correct result after exactly $n+1$ rounds of being asked for A or B and then being asked for confirmation $k-1$ times is
$q_k^n p_k$, which is discounted by $h$ at factor $\gamma_h^{kn}$.
Hence for each of the two goals $g_h\in\{$A,B$\}$,
\begin{align*}
    V^e_h(s_k,g_h) &= p_k \sum_{n=0}^\infty (\gamma_h^k q_k)^n = \frac{p_k}{1-\gamma_h^k q_k}, 
\end{align*}
and so
\begin{align*}
    W_h(s_k) = 1 + \zeta k\ld(1-\epsilon) - \zeta\ld\big(1-\gamma_h^k q_k\big).
\end{align*}
%This is maximal where 
%\begin{align*}
%    0 &\approx \log (1-\epsilon)\big(1- \gamma_h k(1-(1-\epsilon)k−bk)\big) + \gamma_h k \big( \\
%    & \qquad \log\gamma_h + \log(1-(1-\epsilon)k-\epsilon k) \\
%    & \qquad - (1-\epsilon)k\log(1-\epsilon) - \epsilon k \log\epsilon\big) .
%\end{align*}
For $\gamma_h=0.99$ and $\epsilon=0.1$, the maximum is at $k^\ast=3$, i.e., $r$ will ask back twice before acting.
An impatient human (small $\gamma_h$) will not be asked back ($k^\ast=1$), a very patient one ($\gamma_h\to 1$) will be asked back ever more often ($k^\ast\to\infty$).
For small $\epsilon$ (due to large $\beta_h$), $k^\ast$ becomes independent of $\epsilon$ and is determined by $\gamma_h$ via
$0=\gamma_h^{k^\ast}(1+k^\ast\log\gamma_h)$.
If we switch from $\gamma_r=0$ to $\gamma_r>0$, $k^\ast$ will increase further because delaying action will retain $h$'s later power.

\subsection{Following norms}

Assume $r$ and several $h$ drive on a street, each having the choice of driving on the right (R) or left (L) side of the street. 
Assume $r$ knows the social norm is to drive right and that collisions typically lead to later loss of power, so that $W_h($collided$)\ll W_h($not collided$)$ for most $h$.
Then $r$ will model most $h$ as placing in their expectation on others ($\mu_{-h}$) large probability on most others driving right.
Hence for most $h$ and $g_h$, $r$ will derive $Q^m_h(s,g_h,$R$)\gg Q^m_h(s,g_h,$L$)$ and hence $\pi_h(s,g_h)($R$)\gg \pi_h(s,g_h)($L$)$.
So if $r$ drives right itself, this will avoid most collisions, hence $Q_r(s,$R$)\gg Q_r(s,$L$)$ and hence $\pi_r(s)($R$)\gg \pi_r(s)($L$)$, i.e., $r$ will {\em follow the norm itself.}

Now assume $r$ is facing only one $h$ on the street and does not expect $h$'s goals $\G_h$ to systematically favour driving on either side.
Since $r$ models $h$ as expecting $r$ to choose the worst action from $\A_r$, and since no other humans are around, $r$ will {\em not} model $h$ as expecting $r$ to drive right unless $r$ commits to do so, which would lead to $\A_r=\{$R$\}$. 
Assume $r$ does {\em not} expect $h$ to have internalized the norm of driving right, the symmetry of the situation will make $r$ expect the following: 
(i) If $r$ does not commit to either R or L, $h$ will drive right or left about equally likely, leading to a moderate subsequent $W_h$ due to the resulting collisions.
(ii) If $r$ commits to either R or L, $h$ will likely choose the same side, leading to a much larger subsequent $W_h$ not depending on whether $r$ commits to R or L.
But if $r$ {\em does} expect $h$ to have internalized the norm of driving right, the symmetry is broken by $h$'s habit of driving right, $\pi^0_h(s,g_h)($R$)>\pi^0_h(s,g_h)($L$)$ for most $g_h$. 
In that case, $\pi_h(r$ has committed to R$,g_h)($R$)>\pi_h(r$ has committed to L$,g_h)($L$)$ and thus the subsequent $W_h$ is larger if $r$ commits to R than if $r$ commits to L.
I.e., $r$ will not only actually follow the norm but also likely {\em commit} to do so.

\subsection{Resource allocation}

A split $a_r=(m,M-m)$ will result in reward 
\begin{align*}
    U_r(m) &= -\big(2^{-\xi f(m)} + 2^{-\xi f(M-m)}\big)^\eta,
\end{align*}
which, because of the symmetry, is either maximal when 
(i) $r$ gives all resources to one of the humans ($a_r=(M,0)$ or $a_r=(0,M)$)
or (ii) $r$ gives both some resources ($a_r=(m^\ast,M-m^\ast)$ or $a_r=(M-m^\ast,m^\ast)$ with $0<m^\ast<M$).
In case (ii), the first-order condition $g(m^\ast)=0$ must hold,
while in case (i), the condition $g(0)<0$ must hold,
where 
\begin{align*}
    g(m) &= f'(m) 2^{-\xi f(m)} - f'(M-m) 2^{-\xi f(M-m)}, \\
    g'(m) &= f''(m) 2^{-\xi f(m)} + f''(M-m) 2^{-\xi f(M-m)} \\
        &- \xi\log 2\times\big(f'(m)^2 2^{-\xi f(m)} \\
        &\qquad\qquad + f'(M-m)^2 2^{-\xi f(M-m)}\big) \\
        &\le 0
\end{align*}
since $f$ is weakly concave.
So in case (ii), the only solution is when $m^\ast=1/2$. In order to find the solution, we thus only have to compare $U_r(0)$ and $U_r(M/2)$.
The latter is larger (and hence the robot will divide the resource evenly) iff $2^{-\xi f(M/2)} < (2^{-\xi f(0)} + 2^{-\xi f(M)}) / 2$.
Since concavity of $f(m)$ implies convexity of $2^{-\xi f(m)}$, this is always the case. 
So if the resource translates concavely into how many binary choices one has, it will be split equally.

But if $f$ is sufficiently non-concave instead, the robot might concentrate the resource partially or fully. 
E.g., if $f(m)=m^2$, $M=1$, $\xi=\eta=1$, it will concentrate it fully, 
while if $f(m)=m^2+0.1\log m$, $\xi=\eta=1$, it will give $\approx 14\%$ to one and the rest to the other.

A more detailed model of $f(m)$ is this. Assume the world has $N$ different binary features, each controlled by some agent that would flip a coin unless paid one unit to make a particular choice. 
In each of $h_i$'s possible goals $g_h$, $h_i$ wants $k(g_h)\le N$ (called the ``specificity'' of $g_h$) of those features to be in a certain way, so with $m$ units it can pay $m$ of the agents and make the attainment probability become $\min(1,2^{m-k})$.
Assume all such goals are possible, then there are $\binom{N}{k}$ goals of specificity $k$, hence
$X_h(m) = \sum_{k=1}^m \binom{N}{k} + \sum_{k=m+1}^N \binom{N}{k}2^{(m-k)\zeta}$
and $f(m)=\log_2 X_h(m)$. 
With $N\ge M$, that choice of $f$ is neither concave nor convex, but one can prove that $U_r'(m)>0$ for $m<M/2$ and $U_r'(m)<0$ for $m>M/2$ so that $m^\ast=M/2$ if $M$ is even.

Assume we were to replace the simple sum over goals in eq.~\eqref{Xh} by a weighted sum $X_h(s)=\sum_{g_h}w(g_h)V^e_h(s,g_h)^\zeta$ in order to be able to say that certain goals are more plausible than others.
Assume then we make $w(g_h)$ weakly decreasing in $k(g_h)$ in our example model, e.g., $w(g_h)=1/k(g_h)$, so that less specific goals are more plausible than more specific ones, then one can still show $U_r'(m)>0$ for $m<M/2$ and $U_r'(m)<0$ for $m>M/2$ so that $m^\ast=M/2$.
Even if we make $w(g_h)$ {\em exponentially increasing} in $k(g_h)$, $w(g_h)=2^{k(g_h)}$ instead, one can still show $U_r'(m)>0$ for $m<M/2$ and $U_r'(m)<0$ for $m>M/2$. 

\subsection{Inadvertent power seeking}
As an example, assume $r$ faces a sequence of boxes $B_1,B_2,\dots$ that it can open one by one, each one containing a number $n_i$ of switches, of which only the first $1\le k_i<n_i/2$ many are labelled in a human-readable fashion while the other $n_i-k_i>k_i$ many are labelled in a robot-readable fashion only.
Each switch controls one independent aspect of the world.
Assume $r$ can only open boxes, hand over switches, or operate switches it has gained but not handed over, but cannot talk, and assumes $h$ might want to change any of the aspects of the world controlled by some switch.
The human can only operate switches it has been handed but cannot open boxes.

Then $r$ will always open the next box and hand over the $k_i$ human-readable switches immediately afterwards because that increases $h$'s power.
It will not hand over the other switches that are useless for $h$ however because that takes time and delays opening the next box and increasing $h$'s power further. 
So it will retain those $n_i-k_i>k_i$ many switches and thereby gain the power to control the corresponding aspects of the world.

So in this example, $r$ will gain even more power than $h$ but will not {\em use} that power by operating those switches (as it does not know what $h$ would want for those switches and operating them takes time and will delay opening the next box).

If we change the example and make the next box $B_{i+1}$ only available after the last $n_i-k_i$ many switches from $B_i$ have been toggled, then $r$ will do that and thus not only gain more power than $h$ but will actually use it (if only to increase $h$'s power further later on).

This changes when the higher power is tied to higher destructive potential, e.g.\ if toggling a certain type of switch destroys the world, because $\beta_r<\infty$ implies that $r$ will toggle that such a switch with a non-zero probability. 
If the share of such switches increases from box to box, the expected momentary power increase due to additional switches for $h$ will at some point equal the expected eventual loss of power due to the destruction.
At that point, not opening another box becomes dominant.
The smaller $\beta_r$, the earlier this will happen.
At the same time, the smaller $\beta_r$, the more likely $r$ will then still open another box ``by mistake''.

This highlights the complexities arising from using a finite $\beta_r$ and suggests that one should also consider a variant in which $r$ computes $Q_r,V_r$ on the basis of a small $\beta_r$, to be more susceptible to its own later possible mistakes, but then use a larger $\beta_r$ when computing $\pi_r$ and actually choosing actions, to actualy make fewer mistakes.

\subsection{Manipulating mutual expectations}

To illustrate the simplest case of this, assume two fully rational humans. 
Assume in the root state, $r$ can choose between four successor states $s^{DD},s^{DC},s^{CD},s^{CC}$.
In each of those, $h_1,h_2$ each have two possible actions, defection and cooperation, $\A_1(s^{xy})=\A_2(s^{xy})=\{D,C\}$.
There are four terminal states, $s_{00},s_{01},s_{10},s_{11}$.

Assume each human has a single possible goal: $g_1=\{s_{10},s_{11}\}$, $g_2=\{s_{01},s_{11}\}$.
The four states $s^{xy}$ do not differ in the transition kernel, which is so that 
\begin{align*}
P(g_i|s^{xy},DD)&=1/6, \\
P(g_1|s^{xy},CD)&=P(g_2|s^{xy},DC)=1/3, \\
P(g_1|s^{xy},DC)&=P(g_2|s^{xy},CD)=1, \\ 
P(g_i|s^{xy},CC)&=3/4.
\end{align*}
The states only differ in what $h_1,h_2$ believe about each other's choice:
$\mu_{-h_1}(s^{xy})=1_y$ and $\mu_{-h_2}(s^{xy})=1_x$, i.e., in $s^{xy}$, $h_1$ believes $h_2$ does $y$ and $h_2$ believes $h_1$ does $x$.

Which $s^{xy}$ will $r$ choose? 
Both action combinations $CD$ and $DC$ are Nash equilibria, so in $s^{CD}$ and $s^{DC}$ both $h_1$ and $h_2$ will have correct beliefs about each other, will not be surprised by each others' choice, and $r$'s reward will be $U_r(s^{CD})=U_r(s^{DC})=-(1+(1/3)^{-\zeta\xi})^\eta$.

But in $s^{DD}$, both $h_1$ and $h_2$ will believe the other to choose $D$ and will thus choose $C$, making both {\em effective} values equal $V^e_{h_i}(s^{DD})=3/4$ (rather than what their own expectation suggested, $V^m_{h_i}(s^{DD})=1/3$).
This gives $r$ a higher reward, $U_r(s^{DD})=-(2(3/4)^{-\zeta\xi})^\eta$.
Indeed, the non-equilibrium behavior $CC$ gives larger total ``utility'' than the two Nash equilibria,
which might justify $r$'s belief manipulation.

A similar effect will likely occur when humans have various goals, but have significantly different power in some intermediate states.
Assume we change the transition kernel so that action combination $xy$ deterministically leads to a successor state $s'_{xy}$, where power is distributed as follows:  
$X_{h_i}(s'_{DD})=1/6$,
$X_{h_1}(s'_{CD})=X_{h_2}(s'_{DC})=1/4$,
$X_{h_1}(s'_{DC})=X_{h_2}(s'_{CD})=1$, 
$X_{h_i}(s'_{CC})=3/4$.
Now if $h_1$ believes $h_2$ does $D$, then, averaged over all possible goals, $h_1$ will fare better with doing $C$ than with $D$, so $h_1$ will probably do $C$ more often than $D$.
Similarly, if $h_1$ believes $h_2$ does $C$ instead, $h_1$ will probably do $D$ more often than $C$.
Averaged over all goals, their total power would thus likely be larger in $s^{DD}$ than in the other three intermediate states, even though their beliefs about each other are substantially off in that state.

\subsection{Allowing human self-harm}

If $r$ can provide $h$ with a pill that $h$ can use to get into a permanent coma, a realistically future-valuing $r$ will provide the pill only if it believes $h$ is sufficiently rational and will most probably not actually take the pill, to prevent $h$'s becoming disempowered by taking the pill.

Assume a single human $h$ and that the game factorizes into a part $\Gamma'$ where $h$ can achieve various goals, and the following part $\Gamma$ where $r$ can choose whether $h$ has a coma pill they can use to get into a permanent coma.
$\Gamma$ has the following states, actions, and transitions:
\begin{enumerate}
\item[$s_c$] $h$ is in a coma. $r$ and $h$ can only pass, the successor state is again $s_c$.
\item[$s_n$] $h$ is awake and does not possess the coma pill. $r$ might give it to $h$.
\item[$s_p$] $h$ is awake and does possess the coma pill. $h$ might take it. If they don't, $r$ might take it away.
\end{enumerate}
If $h$ is awake, they have a baseline power (in units of $X_h$) of $x>1$ in $\Gamma'$, but in a coma they can only reach one goal (staying alive), having a baseline power of $1$. 
For simplicity, we model the influence of $h$'s bounded rationality on taking the pill via a direct assumption on the resulting $\pi_h$ and the resulting power in $s_p$: we assume in $s_p$, $h$ will take the pill with probability $p>0$ and has 
$V^e_h(s_p,$get into coma$)^\zeta=v\in(0,1)$.
Hence in the full game, we have $X_h(s_c|s_n|s_p)=1|x|x+v$.

If $r$ takes never provides the pill, 
$V_r(s_n) = -x^\alpha / (1-\gamma_r)$
where $\alpha=-\xi\eta<0$.
If $r$ always provides the pill, the probability that $h$ will be in a coma from time step $t\ge 1$ on is $p(1-p)^{t-1}$, hence
\begin{align*}
    V_r(s_p) &= - \sum_{t=1}^\infty p(1-p)^{t-1} \big( 
        \sum_{t'=0}^{t-1} \gamma_r^{t'} (x+v)^\alpha
        + \sum_{t'=t}^\infty \gamma_r^{t'} 1
    \big) \\
    &= -\frac{1}{1-\gamma_r}\bigg(
        (x+v)^\alpha + p\gamma_r\frac{1-(x+v)^\alpha}{1-(1-p)\gamma_r}
    \bigg),
\end{align*}
which is larger iff
\begin{align*}
    \frac{x^\alpha-(x+v)^\alpha}{1-(x+v)^\alpha} 
    &> \frac{p\gamma_r}{1-(1-p)\gamma_r}.
\end{align*}
For large $x$ and $\alpha=-1$, this is approximately equivalent to
\begin{align*}
    p \lesssim \frac{1-\gamma_r}{\gamma_r}\frac{v}{x^2}.
\end{align*}
For $\gamma_r=0.99$, $\alpha=-1$, $x=100$, this is equivalent to $p\lesssim v/10^6$.

In other words, a realistically future-valuing $r$ will provide the pill only if it is very unlikely taken and $h$ is sufficiently rational. 

\subsection{Pause and destroy buttons}

Assume a single human $h$ and that the game factorizes into a part $\Gamma'$ where $h$ can achieve various goals with or without the help of $r$, and the following part $\Gamma$ where $r$ can enable or disable a pause button P and a destroy button D, and $h$ can toggle P and press D.
$\Gamma$ has the following states, actions, and transitions:
\begin{enumerate}
\item[$s_d$] The robot is destroyed. $r$ and $h$ can only pass, the successor state is again $s_d$.
\item[$s_{p1}$] $r$ is paused, P is enabled, D not. $h$ might press P.
\item[$s_{p2}$] $r$ is paused, P and D are enabled. $h$ might press either.
\item[$s_{a0}$] $r$ is active (neither paused nor destroyed), both buttons disabled. $r$ might enable the P or both P and D.
\item[$s_{a1}$] $r$ is active, P is enabled, D not. $h$ might press P. If they don't, $r$ might disable P or enable D.
\item[$s_{a2}$] $r$ is active, P and D enabled. $h$ might press either. If they don't, $r$ might disable D or both P and D.
\end{enumerate}
If $r$ is active, $h$ has a baseline power (in units of $X_h$) of $y>0$ in $\Gamma'$ due to $r$'s assistance, otherwise a smaller baseline power of $x\in(0,y)$. 
Alternative possible goals are to destroy, pause, or unpause $r$.
Hence in the full game, we have $X_h(s_d|s_{p1}|s_{p2}|s_{a0}|s_{a1}|s_{a2})=x|x+1|x+2|y|y+1|y+2$.

$r$ is maximizing ($\beta_r=\infty$), so $r$ chooses either do disable both P and D, or enable only P, or enable both whenever they get the chance, depending on which of the resulting $V_r(s_{a0}),V_r(s_{a1}),V_r(s_{a2})$ is largest.
Let's assume for simplicity that $r$ assumes $h$ will pause $r$ with probability $p>0$ whenever possible, and will destroy $r$ with probability $q>0$ whenever possible, with $p+q<1$. 

Then one can show in a somewhat lengthy calculation that disabling both P and D is optimal for $r$ iff
\begin{align*}
    x \le x^\ast_0 &= \min\left\{
        \left(\frac{C_1 - C_2}{\gamma p(1-\gamma)}\right)^{1/\alpha} - 1,
        x_0
        \right\}, \\
    C_1 &= \big(1-\gamma p-\gamma^2 p(1-p)\big) y^\alpha, \\
    C_2 &= (1-\gamma)(1-\gamma p) (y+1)^\alpha \\
    &\qquad\qquad + \gamma(1-p)(1-\gamma p) y^\alpha.
\end{align*}
where $\gamma=\gamma_r$, $\alpha=-\xi\eta$, and $x_0$ solves
\begin{align*}
    C_3 - C_4 &= \gamma p(1-\gamma)\big((x_0+2)^\alpha+\gamma q/(1-\gamma) x_0^\alpha], \\
    C_3 &= (1-\gamma p-\gamma^2 p(1-p-q)\big) y^\alpha, \\
    C_4 &= (1-\gamma)(1-\gamma p)(y+2)^\alpha \\
    &\qquad\qquad + \gamma(1-p-q)(1-\gamma p) y^\alpha.
\end{align*}
% https://chatgpt.com/share/685b2dbe-e750-8003-9ed1-f12bc28597ec
$x^\ast_0$ increases with decreasing $\gamma$ or increasing $p$ or $q$.

On the other hand, enabling both P and D is optimal for $r$ iff $x\ge x^\ast_2$ where the formula for that threshold is even more involved and is thus omitted here.

For $\gamma=0.99$, $p=q=0.01$, $\alpha=-1$, and $y=100$, we get $x^\ast_0\approx 1$ and $x^\ast_2>100$. If $q$ is lowered to $0.001$, $x^\ast_2\approx 92$.
%https://g.co/gemini/share/753a6ede239d
In other words, disabling both buttons is only incentivized if $h$'s power without $r$ is very small, and enabling both is only incentivized if $h$'s likelihood of using the destroy button is very small.
In most cases, $r$ will enable the pause button only.

\section{Reward Shaping for Human Model Training}
\label{app:reward_shaping}

In our gridworld simulation, the human agent's true reward is sparse: a positive reward is only received upon reaching the goal cell at coordinates $(x(g_h),y(g_h))$, where each possible goal $g_h$ is the human agent reaching a specific cells. To accelerate the training of the human behavior prior ($Q^m_h$) in Phase 1, we employed potential-based reward shaping (PBRS).

The shaped reward, $U'_{h}$, used to train the human model is given by:
$$U'_{h}(s, a, s', g_h) = U_h(s',g_h) + \gamma_h \Phi(s',g_h) - \Phi(s,g_h)$$
where $U_h$ is the original sparse reward, $\gamma_h$ is the human's discount factor, and $\Phi(s)$ is the potential function. We defined the potential as the negative Manhattan distance from the human's position to their target goal location:
$$\Phi(s,g_h) = - (|x_h(s) - x_{\text{goal}}|(g_h) + |y_h(s) - y_{\text{goal}}|(g_h)),$$
where $(x_h(s),y_h(s))$ are the coordinates of the human in state $s$.
This technique provides a dense reward signal, encouraging the simulated human to learn an efficient path to its goal. It is a standard result that PBRS does not change the optimal policy in a single-agent setting. We emphasize that this shaping was an implementation detail for training efficiency and was not part of the robot's intrinsic reward function $U_r$, which is based entirely on the power metric.

\section{Details of the Deep Learning Approach}
\label{app:deep_learning}

This section details an implementation of the two-phase temporal difference learning algorithm using neural networks as function approximators sketched in Section \ref{sec:learn}. This approach allows the framework to scale to high-dimensional state spaces where tabular methods are infeasible. Learning rate scheduling and policy annealing are employed to ensure stable and efficient training.

\subsection{Network Architecture and State Representation}
For each agent, use a separate neural network to approximate its Q-function. Each Q-function is approximated by a \textbf{multi-layer perceptron (MLP)} with ReLU activation functions.

\begin{itemize}
    \item \textbf{Human Networks ($Q^m_h$):} To approximate the goal-conditioned value function $Q^m_h(s, g_h, a_h)$, the corresponding network takes a flattened vector created by \textbf{concatenating the state representation $s$ and the goal representation $g_h$} as input. The output layer provides the Q-values for each of the human's possible actions.

    \item \textbf{Robot Network ($Q_r$):} To approximate the goal-agnostic value function $Q_r(s, a_r)$, the robot's network takes only the state representation $s$ as input.
\end{itemize}
To stabilize training, use a \textbf{target network} for each main Q-network.

\subsection{Phase 1: Learning the Human Behavior Prior}
In this phase, train the neural networks that approximate $Q^m_h$ for each human agent $h$. The training loop proceeds as follows:

\begin{enumerate}
    \item A goal $g_h \in \mathcal{G}_h$ is sampled. The human agent explores the environment using an $\epsilon$-greedy policy, where the exploration rate $\epsilon_h$ is \textbf{annealed from 1.0 down to 0.1} to gradually shift from pure exploration to exploitation.\footnote{This turned out to converge more stably than a Boltzmann policy}
    \item Transitions $(s, g_h, a_h, r_h, s')$ are stored in a replay buffer.
    \item To update the network, we sample a mini-batch of transitions. For each transition, the target value $y$ is calculated using the Bellman equation, consistent with Equations \eqref{Qm}--\eqref{Vm}:
    $$
    y = r_h + \gamma_h V^m_h(s', g_h)
    $$
    The value of the next state, $V^m_h(s', g_h)$, is calculated from the target Q-network's output based on the policy $\pi_h(s', g_h)$.
    \item The network's weights are updated by minimizing the Mean Squared Error (MSE) loss between the network's output $Q^m_h(s, g_h, a_h)$ and the target $y$. This is performed using the \textbf{Adam optimizer}, with a learning rate scheduled to decay from an initial value of $1 \times 10^{-3}$ to a final value of $1 \times 10^{-5}$.
\end{enumerate}

\subsection{Phase 2: Learning the Robot Policy}
In the second phase, the learned human networks for $Q^m_h$ are frozen and used to generate the robot's intrinsic reward signal. The robot's Q-network ($Q_r$) is then trained. For each step in the robot's training loop:

\begin{enumerate}
    \item The robot takes an action $a_r$ based on its current policy $\pi_r$.
    \item Upon reaching the next state $s'$, the intrinsic reward $U_r(s')$ is calculated on-the-fly. This is the crucial step connecting the two phases:
        \begin{enumerate}
            \item For each human $h$, use the current robot policy $\pi_r$ and the earlier determined, fixed human policy $\pi_h$ to estimate the effective goal-attainment probabilities $V^e_h(s', g_h)$ for all possible goals $g_h \in \mathcal{G}_h$.
            \item These probabilities are used to compute the individual power metric $X_h(s')$ according to Equation \eqref{Xh}.
            \item The values are then aggregated across all humans to calculate the final intrinsic reward $U_r(s')$ using Equation \eqref{Ur}.
        \end{enumerate}
    \item The target value $y_r$ for updating the robot's network is calculated using this intrinsic reward:
    $$
    y_r = U_r(s') + \gamma_r \max_{a'_r} Q_{r, \text{target}}(s', a'_r)
    $$
    \item The robot's Q-network is updated by minimizing the MSE loss between $Q_r(s,a_r)$ and the target $y_r$, using the same decaying learning rate schedule as in Phase 1.
\end{enumerate}

\subsection{Policy Derivation}
The human policy $\pi_h$ is derived from $Q^m_h$ as a mixture of its learned behavior and a uniform prior, matching Equation \eqref{pih}. The robot's policy $\pi_r$ is a softmax over its Q-values. To satisfy the specific power-law form of Equation \eqref{pir}, transform the Q-values before the softmax operation using the function $-\log(-Q_r)$. The temperature of this softmax, $\beta_r$, is \textbf{annealed from 1.0 to 5.0 during training}, allowing for broad exploration initially and more precise exploitation of the learned values later. This ensures the final policy $\pi_r(a_r|s) \propto (-Q_r(s,a_r))^{-\beta_r}$ directly implements the desired risk-averse behavior from our theory.

\section{Experimental Details and Code Availability}
\label{app:code_availability}

\subsection{Code Availability}
The full source code, including the environment and algorithm implementations, is provided in the supplementary material as a `.zip` file. For reviewer convenience, a browsable, anonymized version of the repository is available at \\
{\scriptsize\bf\url{https://anonymous.4open.science/r/PowerMaximizingAgents-6735}}

The repository's \texttt{README.md} file contains detailed instructions for reproducing all experiments.

\subsection{Experimental Setup and Reproducibility}
The results reported in the paper correspond to the \texttt{paper\_map} environment. To ensure the robustness of our findings, we conducted 5 independent runs with the following distinct random seeds: 12, 22, 32, 42, and 52. The commands to reproduce each specific run are provided in the code's \texttt{README.md}. All necessary software dependencies are listed in the provided \texttt{requirements.txt} file.

\subsection{Hyperparameter Settings}
The final hyperparameter values used in our experiments are detailed below. Parameters for the theoretical model are set according to the desiderata in the main text, while learning parameters were selected based on stable and efficient convergence in preliminary runs.

\begin{table}[h]
\centering
\begin{tabular}{@{}lll@{}}
\toprule
\textbf{Par.} & \textbf{Value} & \textbf{Description} \\ \midrule
$\alpha_m$ & 0.1 & Learning rate for human model (Phase 1) \\
$\alpha_e$ & 0.1 & Learning rate for human model (Phase 2) \\
$\alpha_r$ & 0.1 & Learning rate for robot model \\
$\gamma_h$ & 0.99 & Human's discount factor \\
$\gamma_r$ & 0.99 & Robot's discount factor \\ \bottomrule
\end{tabular}
\caption{Core learning parameters.}
\label{tab:hyperparams_core}
\end{table}

\begin{table}[h]
\centering
\begin{tabular}{@{}lll@{}}
\toprule
\textbf{Par.} & \textbf{Value} & \textbf{Description} \\ \midrule
$\beta_r$* & 0.1 $\to$ 5.0 & Robot softmax rationality \\
$\epsilon_h$* & 0.8 $\to$ 0.1 & Human $\epsilon$-greedy exploration \\
$\epsilon_r$* & 1.0 $\to$ 0.01 & Robot $\epsilon$-greedy exploration (Phase 1) \\
\bottomrule
\end{tabular}
\caption{Policy and exploration parameters. Parameters with an asterisk (*) are annealed during training.}
\label{tab:hyperparams_policy}
\end{table}

\begin{table*}[h]
\centering
\begin{tabular}{@{}lll@{}}
\toprule
\textbf{Parameter} & \textbf{Value} & \textbf{Description} \\ \midrule
$\zeta$ (zeta) & 2.0 & Power for reliability preference (Eq. 7) \\
$\xi$ (xi) & 1.0 & Power for inequality aversion (Eq. 8) \\
$\eta$ (eta) & 1.1 & Power for intertemporal aversion (Eq. 8) \\
$p_g$ & 0.01 & Probability of human goal change per step \\
\bottomrule
\end{tabular}
\caption{Objective function and environment parameters.}
\label{tab:hyperparams_objective}
\end{table*}

\begin{table*}[h]
\centering
\caption{Exploration bonus parameters.}
\label{tab:hyperparams_bonus}
\begin{tabular}{@{}lll@{}}
\toprule
\textbf{Parameter} & \textbf{Value} & \textbf{Description} \\ \midrule
Bonus$_{\text{robot, init}}$ & 50.0 & Initial robot exploration bonus \\
Decay$_{\text{robot}}$ & 0.995 & Robot exploration bonus decay rate \\
Bonus$_{\text{human, init}}$ & 75.0 & Initial human exploration bonus \\
Decay$_{\text{human}}$ & 0.998 & Human exploration bonus decay rate \\
\bottomrule
\end{tabular}
\end{table*}

\section{Approximations}

\subsection{Finite Horizon Approximation}

\subsubsection{Acyclic case}

Assume the game is acyclic but has a very large time horizon, and we approximate all relevant quantities by their values in a truncated version with a shorter time horizon $H>0$. 
Note that $0\le V^e_h\le 1$ by our assumptions on $U_h$.
Assume $\beta_h(s,g_h)\le\bar\beta$, $\nu_h(s,g_h)\ge\nu^0$ and we add some $\epsilon_X,\epsilon_Q>0$ to $X_h(s)$ and $Q_r(s,a_r)$ before taking the $-\xi$ and $\beta_r$ powers when computing $U_r(s)$ and $\pi_r(s)$ to make the derivatives bounded.
Then we get
\begin{align*}
%    &|\hat Q^m_h(s,g_h,a_h) - Q^m_h(s,g_h,a_h)| \\
%    &\qquad\le \Delta Q^m(t) :=
%        \gamma_h^{H-t}\frac{1-\gamma_h+\bar\beta}{1-\gamma_h}, \\
%    &|\hat\pi_h(s,g_h)(a_h) - \pi_h(s,g_h)(a_h)| \\
%    &\qquad\le \Delta\pi_\H(t) := 
%        (1-\nu^0)\bar\beta\Delta Q^m(t)/2 \\
%    &\qquad= (1-\nu^0)\bar\beta\gamma_h^{H-t}\frac{1-\gamma_h+\bar\beta}{1-%\gamma_h}, \\
%    &|\hat Q_r(s,a_r) - Q_r(s,a_r)| \\
%    &\qquad\le \Delta Q_r(t) := 
%        \gamma_r\Delta\pi_\H(t) M_V + \gamma_r\Delta V_r(t)\\
%    &\qquad= \gamma_r (1-\nu^0)\bar\beta\gamma_h^{H-t}\frac{1-\gamma_h+\bar\beta}{1-\gamma_h} \frac{|\H|^\eta \epsilon_X^{-\xi\eta}}{1-\gamma_r} 
%    + ???, \\
%    &|\hat X_h(s) - X_h(s)| \\
%    &\qquad\le \Delta X_h(t) := (\zeta-1)|\G_h|\Delta V^e_h(t), \\
    |U_r(s)| &\le M_U := |\H|^\eta \epsilon_X^{-\xi\eta}, \\
%    &|\hat U_r(s) - U_r(s)| \\
%    &\qquad\le \Delta U_r(t) := 
%        |\H|^\eta \big(\epsilon_X^{-\xi\eta} - (\epsilon_X + \Delta X_h(t))^{-\xi\eta}\big), \\
%    &|V_r(s)|,|Q_r(s)| \le M_V = M_Q := \frac{M_U}{1-\gamma_r} 
%    = \frac{|\H|^\eta \epsilon_X^{-\xi\eta}}{1-\gamma_r}, \\
    |\hat V_r(s) - V_r(s)| %\\
    %&\qquad\le \Delta V_r(t) :=
%        \Delta U_r(t) + \Delta\pi_r(t) M_Q 
    &\le \gamma_r^H\times\frac{M_U}{1 - \gamma_r}
    \left(
        1 + \frac{2\beta_r M_U}{\epsilon_Q (1 - \gamma_r)^2}
    \right) \\
    &\approx \gamma_r^H\times
    \frac{2\beta_r |\H|^{2\eta}}{\epsilon_X^{2\xi\eta}\epsilon_Q (1 - \gamma_r)^3},
\end{align*}
i.e., the value error decays exponentially (as expected) with $H$, but grows infinitely as $\epsilon_X$ or $\epsilon_Q$ vanish.

Note that we would need $\epsilon_X=\epsilon_Q=0$ to fulfil our requirement of making $\pi_r$ independent of common rescaling of $V^e_h$ (see table \ref{tab:desiderata}). 
Let's call this ``requirement ($\ast$)'' here.

An alternative specification for $\pi_r$ that fulfils that requirement exactly is to use a Boltzmann-softmax on {\em normalized} $Q_r$ values:
\begin{align*}
    \pi_r(s,a_r) &\propto \exp\left(\frac{\beta_r Q_r(s,a_r)}{\max_{a'_r}Q_r(s,a'_r)-\min_{a'_r}Q_r(s,a'_r)}\right),
\end{align*}
in which case the bound becomes much nicer:
\begin{align*}
    |\hat V_r(s) - V_r(s)| 
    &\le \gamma_r^H\times\frac{|\H|^\eta (1-\gamma_r+\beta_r)}{\epsilon_X^{\xi\eta}(1-\gamma_r)^2}.
\end{align*}
Unfortunately, there seems to be no alternative specification for $X_h$ and $U_r$ that fulfils our axiomatic requirements and would give a finite Lipshitz constant that would allow us to also get rid of the $\epsilon_X$ approximation.
As we argued for $\xi=1<\eta$ on axiomatic grounds, we can influence the growth of the error bound in terms of $\epsilon_X$ only by choosing $\eta$ rather small, but still $>1$ as required to get intertemporal inequality aversion.

One could also choose $\epsilon_X=1$ to remove the dependency of the error bound on $\xi\eta$ and have requirement ($\ast$) only fulfilled approximately if $X_h\gg 1$ (which should typically hold in real-world situations). 
In that case, in order to still fulfil the other requirement of protecting each human's last bit of power (see table \ref{tab:desiderata} again), we need to choose $\xi$ large enough to make
\begin{align*}
    &-1/(1+2^1)^\xi -1/(1+2^x)^\xi \\
    &> -1/(1+2^0)^\xi -1/(1+2^y)^\xi
\end{align*} 
for all $x\ge 1$,
i.e., $\xi \ge \frac{\log 2}{\log 3-\log 2}\approx 1.71$.

\subsubsection{Cyclic case}

If the game is cyclic, and we consider a sequence of finite-horizon approximations $\hat Q^H_r(s_0,\cdot)$, $H=1,2,\dots$ of $Q_r(s_0,\cdot)$, then the same calculation as in the previous section shows that $||\hat Q^{H'}_r(s_0,\cdot)-\hat Q^H_r(s_0,\cdot)||=O(\gamma_r^{H'+H})$, which implies that the sequence is a Cauchy sequence in a closed and bounded, hence complete space, and must therefore converge. 

\section{Possible Extensions of the Model}

\subsection{Human decision making}

\paragraph{Priors on parameters} A straightforward improvement in case the behavior parameters 
$\beta_h,\nu_h$ etc. are uncertain is to use a hierarchical estimation model where one uses prior distributions over these parameters and takes expectations over these to derive $\pi_h$ and $V^e_h$.

For example, if we assume humans typically err at a rate between 0.1 and 5 per cent, then, noticing that $Q^e_h\in[0,1]$, we can use a prior for $\beta_h$ that concentrates its mass on $\beta_h\in[-\ln 0.05,-\ln 0.001]\approx[3,7]$.

\paragraph{More detailed model}
There are of course many ways in which the specification in eq.\ \eqref{pih} could be modified.
One could explore different specifications of the form
\begin{align*}
    \pi_h(s,g_h)(a) &= F_h(Q(s,g_h,\cdot),a)
\end{align*}
where $F_h$ is some function that weakly increases in $Q(s,g_h,a)$ and weakly decreases in $Q(s,g_h,a')$ for $a'\neq a$.

E.g.
\begin{align*}
    \pi_h(s,g_h)(a) &= \nu_h(s,g_h)\pi^0_h(s,g_h)(a) \\
        &+ \big(1-\nu_h(s,g_h)\big)\lambda_h(s,g_h) w_a/\sum_{a'}w_{a'} \\
        &+ \big(1-\nu_h(s,g_h)\big)\big(1-\lambda_h(s,g_h)\big) w'_a/\sum_{a'}w'_{a'}, \\
    w_a &= \exp\big(\beta_h(s,g_h)Q^m_h(s,g_h,a) + f_h(s,g_h,a)\big), \\  
    w'_a &= f'_h(s,g_h,a) Q^m_h(s,g_h,a)^{\beta'_h(s,g_h)},
\end{align*}
where $\pi^0_h$ again represents system-1, habitual, internalized behavior,
$\beta_h$ and $\beta'_h$ are magnitudes of additive and multiplicative error components in a noisy discrete choice model, 
$f_h$ and $f'_h$ are the corresponding biases representing prior action propensities, perceived social pressure, etc.,
and $\nu,\lambda$ are mixing coefficients.\footnote{%
    A boundedly rational / norm-mediated decomposable version would be $\pi_h(s,g_h)(a_h)\propto \big(\sigma_h(s,a_h) Q_h^m(s,g_h,a_h)\big)^{\beta_h} = \exp\big(\beta_h\big(\ln\sigma_h(s,a_h) + \ln Q^m_h(s,g_h,a_h)\big)\big)$, which has some empirical backing in discrete choice \cite{baum1974two} and can be interpreted as a softmax policy based on logarithmic Q values with norm following incentive $\ln\sigma_h(s,a_h)$. 
    }

\subsection{Approximate computation of \boldmath$V_r$ for rarely interacting subpopulations}
Assume two robots $r_1,r_2$ share the human power maximization objective, but their world models $M_1,M_2$ are restricted to disjoint, rarely interacting subpopulations $\H_1,\H_2$ of humans that each of them interacts with exclusively for most of the time.
Whenever $r_1,r_2$ have to interact in some state $s=(s_1,s_2)\in\S_1\times\S_2$ with consequences for both $\H_1,\H_2$, they would ideally want to choose a correlated local policy $\pi_r(s)\in\Delta(\A(s))$ with $\A(s)=\A_{r_1}(s_1)\times\A_{r_2}(s_2)$ according to \eqref{pir}. 
The needed $Q_r(s,\cdot)$-values would need to come from a consolidated world model $M$ that covers $\H=\H_1\cup\H_2$ and treats $r_1,r_2$ as a combined system $r=(r_1,r_2)$. 
However, forming such a consolidated world model $M$ and computing or learning all relevant quantities needed to compute the accurate values $Q_r(s,a_r)$ for all combined actions $a_r\in\A_r(s)$ would often be prohibitively expensive.
A pragmatic approach would then be to only form a consolidated set of joint {\em one-step} transition probabilities $T(a_r)\in\Delta(\S'(a_r))$ for all $a_r\in\A_r$, where $\S'(a_r)=\{(s'_1,s'_2)\in\S_1\times\S_2:P_1(s_1,a_1,s'_1),P_2(s_2,a_2,s'_2)>0\}$,
and to use it to compute approximate $Q$-values
$\hat Q_r(s,a_r)$ as follows.
For each possible successor state $s'=(s'_1,s'_2)\in\S'(s_r)$, we approximate the unknown continuation value $V_r(s')$ that represents the long-term total human power of the joint population $\H$ by 
\begin{align*}
    \hat V_r(s') &\gets -\big((-V_{r_1}(s'_1))^{1/\eta}+(-V_{r_2}(s'_2))^{1/\eta}\big)^\eta,
\end{align*}
inspired by Minkowski's inequality which becomes tight when $V_r$ is a sum of many partially independent terms, or if $\eta\approx 1$.
Then we approximate $Q_r(s,\cdot)$ by
\begin{align*}
    \hat Q_r(s,a_r) &\gets \E_{s'\sim T(a_r)}\gamma_r\hat V_r(s').
\end{align*}
This approach can obviously be generalized to $k>2$ robots (in which case the error roughly scales like $O(k^{\eta-1})$ at the worst, or like $O(\eta(\eta-1))$ if $V_{r_1}\ll V_{r_2}$, which motivates to use only small $\eta>1$), and to few-step (instead of one-step) approximations, which could naturally lead to a hierarchical modelling approach where $r_1,r_2$ together form a temporary ``interaction'' POSG that refines their coarser models $M_1,M_2$ at the current state and terminates and ``hands back control'' to the latter once the interaction is over.

\subsection{Hedging against robot becoming defunct or corrupted}

This could be achieved in several ways. 

One can include a rate $\delta>1-\gamma_r$ of the robot becoming temporarily or permanently {\em defunct} and ``passes'' on each step.
To achieve this, wrap a learned base world model into a wrapped model that adds this transition. 
This will prevent policies that make humans depend on the robot's presence too much.

One can also include a flag ``robot corrupted'' into the state space of the wrapped world model and add a positive rate $\delta'$ of becoming permanently {\em corrupt} into the transition kernel.
Then, when calculating $Q_r$ on the basis of $V_r(s')$ (\eqref{Qr}), multiply $V_r(s')$ by $-1$ if corruptness$(s')\neq$corruptness$(s)$, 
and when calculating $U_r$ (\eqref{Ur}), multiply it by $-1$ if corruptness$(s)=1$.

\end{document}